\documentclass[11pt]{article}

\usepackage[margin=1in]{geometry}
\usepackage[T1]{fontenc}
\usepackage[utf8]{inputenc}
\usepackage{lmodern}
\usepackage[hyphens]{url}
\usepackage{xurl}
\usepackage{graphicx}
\usepackage{natbib}
\usepackage{caption}
\usepackage{booktabs}
\usepackage{amsfonts}
\usepackage{nicefrac}
\usepackage{microtype}
\usepackage{xcolor}
\usepackage{hyperref}
\hypersetup{
    colorlinks=true,
    citecolor=blue,
    linkcolor=red,
    urlcolor=blue
}
\usepackage{titling}
\thanksmarkseries{arabic}

\usepackage{amsmath}
\usepackage{amssymb,mathtools,amsthm}
\usepackage{bm}
\usepackage{float}
\usepackage{algorithm}
\usepackage{algpseudocode}

\usepackage{amsmath,amsfonts,bm}

\def\eqref#1{equation~\ref{#1}}

\def\1{\bm{1}}

\DeclareMathAlphabet{\mathsfit}{\encodingdefault}{\sfdefault}{m}{sl}
\SetMathAlphabet{\mathsfit}{bold}{\encodingdefault}{\sfdefault}{bx}{n}

\newcommand{\E}{\mathbb{E}}

\newcommand{\R}{\mathbb{R}}

\DeclareMathOperator*{\argmax}{arg\,max}
\DeclareMathOperator*{\argmin}{arg\,min}

\newtheorem{theorem}{Theorem}
\newtheorem{lemma}{Lemma}
\newtheorem{proposition}{Proposition}

\newtheorem{assumption}{Assumption}

\DeclareMathOperator{\Alt}{Alt}
\DeclareMathOperator{\dist}{dist}
\DeclareMathOperator{\kl}{kl}

\newcommand{\Prob}{\mathbb P}
\newcommand{\simp}[1]{\Delta_{#1}}
\newcommand{\ans}{A_{\preceq}}
\newcommand{\bmu}{\boldsymbol{\mu}}
\newcommand{\bnu}{\boldsymbol{\nu}}
\newcommand{\bx}{\boldsymbol{x}}
\newcommand{\free}{\mathrm{free}}
\newcommand{\cmp}{\mathrm{cmp}}
\newcommand{\coord}{\mathrm{co}}
\newcommand{\lex}{\mathrm{lex}}
\newcommand{\parord}{\mathrm{P}}

\title{When More Objectives Make Identification Easier: The Role of Preferences and Feedback}

\author{
  Bo Xue\thanks{City University of Hong Kong.
  Email: \texttt{boxue4-c@my.cityu.edu.hk}}
  \quad
  Ji Cheng\thanks{City University of Hong Kong.
  Email: \texttt{J.Cheng@my.cityu.edu.hk}}
  \quad
  Yuanyu Wan\thanks{Zhejiang University.
  Email: \texttt{wanyy@zju.edu.hk}}
  \quad
  Shuang Qiu{\color{blue}${}^\dag$}\thanks{City University of Hong Kong.
  Email: \texttt{shuanqiu@cityu.edu.hk}}
}

\usepackage[symbol]{footmisc}

\begin{document}

\date{\today}
\footnotetext[2]{Corresponding Author}

\maketitle
\begin{abstract}
Best-arm identification (BAI) aims to identify the optimal arm using as few samples as possible. While most existing work focuses on a single objective, many real-world applications involve multiple criteria, which motivates the study of BAI in multi-objective bandits. Intuitively, multi-objective identification seems harder, since the learner must identify optimal arms in a higher-dimensional reward space; however, the additional objectives also provide richer feedback that can reduce the required exploration. In this paper, we characterize when and why multiple objectives can make BAI easier. For Gaussian bandits, we establish a characteristic-time lower bound for fixed-confidence identification that holds uniformly across preference orders. Through explicit comparisons under Pareto, linear, and lexicographic preferences, we reveal three phenomena: (i) identifying a larger optimal set can require fewer samples than identifying a unique optimum; (ii) the advantage of vector feedback over coordinate feedback depends fundamentally on the preference order; and (iii) multi-objective identification can require fewer samples than even the easiest corresponding single-objective problem. Finally, we propose Order-Aware Track-and-Stop (OATS), an algorithm that is correct at any prescribed confidence level and asymptotically matches the lower bound.

\end{abstract}

\section{Introduction}
\label{sec:introduction}

Best-arm identification (BAI) is a fundamental pure-exploration problem that aims to identify the optimal arm using as few samples as possible \citep{even2006action,bubeck2009pure,garivier2016optimal}. Classical BAI considers scalar rewards and defines the optimal arm as the one with the largest mean reward \citep{audibert2010best,gabillon2012best,karnin2013almost,kaufmann2016complexity}. However, many applications involve multiple and potentially conflicting criteria, such as accuracy, cost, and latency in LLM evaluation \citep{Liang:2023-HELM,Chang:2024-survey-eval}. This motivates multi-objective BAI, in which each arm yields a reward vector and optimality is determined by a preference relation \citep{roijers2013survey,auer2016pareto,ararat2023vector,shukla2024preference}. For example, linear scalarization compares weighted rewards \citep{drugan2013designing}, lexicographic preferences rank objectives by priority \citep{huyuk2021multiobjective}, and Pareto dominance may admit multiple maximal arms \citep{auer2016pareto}. Multi-objective BAI therefore comprises distinct identification tasks whose complexity depends on the preference relation.

Multi-objective identification may appear harder than its scalar counterpart, as its complexity depends not only on reward gaps but also on the geometry of the reward vectors~\citep{karagozlu2024learning}. Under Pareto preferences, the learner may even need to identify the entire set of nondominated arms~\citep{kone2023adaptive}. On the other hand, multiple objectives also provide richer feedback: each arm pull typically reveals all reward coordinates, and evidence accumulated across objectives can reduce the required exploration~\citep{xue2026beyond}. This tension motivates our central question: \emph{When do multiple objectives make best-arm identification harder, and when can they make it easier?}

Existing studies provide algorithms and complexity guarantees for Pareto identification~\citep{auer2016pareto}, preference-cone identification~\citep{das2025frappe}, cost-aware Pareto identification~\citep{xue2026costawaremultiobjectivebanditstheory}, and lexicographic identification with cross-objective constraints~\citep{xue2026beyond}. However, these results are derived under different answer maps, model classes, and feedback models, which makes direct comparisons difficult. A unified analysis should therefore disentangle three factors: the preference order, which determines how arms are compared and hence what must be identified; the model structure, which specifies how rewards may vary across arms and objectives; and the feedback model, which determines what each sample reveals~\citep{garivier2016optimal,fiez2019sequential,degenne2020gamification,du2021combinatorial}. Their interaction, rather than the number of objectives alone, determines the sample complexity.

To address this gap, we adopt the characteristic time as a common measure of identification difficulty~\citep{garivier2016optimal,degenne2019multiple}. For Gaussian bandits, it is governed by the optimal rate at which sampling can distinguish the true instance from feasible alternatives with a different correct answer. We construct matched comparisons in which only one of the three factors (preference order, model structure, or feedback model) varies at a time. This allows us to isolate the effect of each factor on sample complexity and to quantify when, and by how much, multiple objectives facilitate identification. Our main contributions are as follows.
\begin{itemize}
\item We establish a characteristic-time lower bound for fixed-confidence multi-objective BAI under general preference-induced answer maps. This provides a unified information-theoretic basis for comparing identification tasks across preference orders, model structures, and feedback models.

\item Through explicit comparisons under Pareto and lexicographic preferences, we construct instances in which identifying multiple optimal arms is easier than identifying a unique optimum, as well as instances in which the reverse holds. Hence, neither the preference order nor the number of returned arms alone determines identification difficulty.

\item We characterize the advantage of vector feedback over coordinate feedback and establish conditions under which multi-objective identification requires fewer samples than the best corresponding single-objective benchmark. These results clarify when observations of multiple objectives accumulate into useful evidence rather than being redundant.

\item We propose Order-Aware Track-and-Stop (\textsc{OATS}) and prove its fixed-confidence correctness and asymptotic optimality. Hence, the information-theoretic gains revealed by our comparisons are achievable by an adaptive sampling algorithm.

\end{itemize}

\section{Related Work}
\label{sec:related-work}

We review two closely related lines of research: best-arm identification and multi-objective bandits.

\paragraph{Best-arm identification.}
BAI is a pure-exploration problem studied in both the fixed-budget and the fixed-confidence settings. Early work developed elimination- and confidence-based methods with gap-dependent guarantees for identifying the best arm~\citep{even2006action,bubeck2009pure,audibert2010best,kalyanakrishnan2012pac,gabillon2012best,karnin2013almost,jamieson2014lil}. A complementary information-theoretic line characterizes instance difficulty through alternative models with different correct answers, which yields fundamental lower bounds and asymptotically optimal frequentist or Bayesian sampling rules~\citep{kaufmann2016complexity,garivier2016optimal,russo2016simple}. This framework has been extended to problems with multiple valid answers and general answer maps, where the optimal allocation of samples can be formulated as a sequential game~\citep{degenne2019multiple,degenne2019games}. Work on combinatorial, linear, and transductive pure exploration further exploits structural dependencies and shared observations through optimization oracles and experimental-design principles~\citep{chen2014combinatorial,soare2014best,fiez2019sequential,degenne2020gamification}.

\paragraph{Multi-objective bandits.}
Multi-objective bandits associate each arm with a reward vector and define desirable decisions through a preference relation. Early regret-minimization studies considered Pareto dominance and scalarized utilities, and developed methods that learn nondominated arms or optimize aggregate objectives~\citep{drugan2013designing,busa2017multi}. Subsequent work extended multi-objective regret analysis to contextual, adversarial, and structured linear settings, including dominant-objective and mixed Pareto--lexicographic preferences~\citep{turgay2018multi,tekin2018multi,xu2023pareto,cheng2024hierarchize}. Within lexicographic bandits, regret guarantees have been developed for Lipschitz, adversarially corrupted, and structured low-rank models~\citep{xue2025lexicographicLipschitz,IJCAI:2025:Xue,xue2026efficientonlinelexicographicgeneralized}; hierarchical preferences have also been incorporated into safe multi-objective linear bandits~\citep{xue2026safe}. A complementary pure-exploration line studies Pareto-set identification, with confidence-region, adaptive-sampling, and elimination methods for approximate, relaxed, fixed-budget, and exact fixed-confidence recovery~\citep{auer2016pareto,kone2023adaptive,kone2024fixed,crepon2024pareto}. Recent cost-aware formulations additionally study Pareto identification when vector observations have heterogeneous evaluation costs~\citep{xue2026costawaremultiobjectivebanditstheory}. More general formulations replace coordinatewise dominance with cone-induced, hierarchical, or lexicographic orders, and characterize how preference geometry and structural relations shape identification complexity~\citep{ararat2023vector,karagozlu2024learning,shukla2024preference,das2025frappe,xue2026beyond}.

Most existing studies analyze a single preference relation under a fixed feedback model. In contrast, we compare Pareto, linear, and lexicographic identification within a common characteristic-time framework and determine when multi-objective feedback reduces the required exploration.

\section{Preliminaries}
\label{sec:preliminaries}

\paragraph{Notation.}
For any integer $K\ge 1$, let $[K]=\{1,2,\ldots,K\}$ and $\simp K=\{w\in\R_+^K:\sum_{i=1}^K w_i=1\}$ denote the probability simplex. For any matrix $\bx\in\mathbb{R}^{K\times M}$ with rows $x_1,\ldots,x_K$, its squared Frobenius norm is $\|\bx\|_F^2=\sum_{i=1}^K\|x_i\|_2^2$. For any set $S\subseteq\mathbb{R}^M$, the distance from $x\in\mathbb{R}^M$ to $S$ is $\dist(x,S)=\inf_{z\in S}\|x-z\|_2$.

We consider a stochastic multi-armed bandit with $K\ge2$ arms and $M\ge1$ objectives, where larger rewards are preferred. Each arm $i\in[K]$ is associated with an unknown mean vector $\mu_i\in\R^M$, and the bandit instance is denoted by $\bmu=(\mu_1,\ldots,\mu_K)\in\R^{K\times M}$. At round $t$, the learner selects an arm $I_t$ based on the observations collected so far and observes
\[
X_t=\mu_{I_t}+\varepsilon_t,
\qquad \varepsilon_t\sim\mathcal N(0,I_M),
\]
where the noise vectors are independent across rounds. Accordingly, sample complexity counts vector observations (i.e., arm pulls) rather than individual scalar measurements.

The identification target is determined by a known preference relation $\preceq$ on $\R^M$, with strict part denoted by $\prec$. We consider three representative orders:
\begin{align*}
    x\prec_{\parord}y
    &\iff x_m\le y_m\ \text{for every }m
      \ \text{and }x_m<y_m\ \text{for some }m,
      &&\text{(Pareto)},\\
    x\prec_v y
    &\iff v^\top x<v^\top y,
      \quad v\in\R_+^M\setminus\{0\},
      &&\text{(linear scalarization)},\\
    x\prec_{\lex}y
    &\iff x_{r(x,y)}<y_{r(x,y)},
      \quad r(x,y)=\min\{m:x_m\ne y_m\},
      &&\text{(lexicographic)}.
\end{align*}
For an instance $\bmu$, the prescribed answer is the complete set of maximal arms:
\begin{equation}
    \ans(\bmu)
    =\{i\in[K]:\nexists j\in[K]\text{ such that }\mu_i\prec\mu_j\}.
    \label{eq:answer}
\end{equation}
Under linear scalarization or lexicographic preferences, this set is typically a singleton, whereas Pareto dominance may admit multiple maximal arms. Importantly, $\ans(\bmu)$ is a single set-valued answer: the learner must recover the entire set in Eq.~(\ref{eq:answer}), rather than return any one of several valid answers as in multiple-answer identification~\citep{degenne2019multiple}. More generally, our analysis applies to any prescribed answer map with a finite range.

\paragraph{Fixed-confidence identification.}
For $\delta\in(0,1/2)$, a fixed-confidence algorithm adaptively selects arms based on past observations, stops at a data-dependent time $\tau_\delta$, and outputs a recommendation $\widehat A_\delta\subseteq[K]$. It is $\delta$-correct on a model class $\Theta\subseteq\R^{K\times M}$ if, for every $\bmu\in\Theta$,
\[
    \Prob_{\bmu}(\tau_\delta<\infty)=1,
    \qquad
    \Prob_{\bmu}\!\left(\widehat A_\delta=\ans(\bmu)\right)\ge 1-\delta.
\]
Performance is measured by the expected stopping time $\E_{\bmu}[\tau_\delta]$, which we aim to minimize subject to $\delta$-correctness. The difficulty of this problem depends on how close $\bmu$ is to instances in $\Theta$ with a different answer: the closer these alternatives are, the more samples are needed to rule them out.




In classical BAI, a standard assumption is that the best arm is unique~\citep{kaufmann2016complexity,garivier2016optimal}. With finitely many arms, this implies a strictly positive gap between the best and second-best arms, so the best arm is preserved under sufficiently small perturbations of the means. We impose the analogous condition on the general, possibly set-valued, answer map $\ans$.
\begin{assumption}[Local stability]
\label{ass:local-stability}
The true instance $\bmu\in\Theta$ is locally stable with respect to the answer map: there exists $r_{\bmu}>0$ such that
\begin{equation}
\ans(\bnu)=\ans(\bmu)
\quad
\text{for every } \bnu\in\Theta
\text{ satisfying }
\|\bnu-\bmu\|_F<r_{\bmu}.
\label{eq:local-stability}
\end{equation}
\end{assumption}
Assumption~\ref{ass:local-stability} requires the answer $\ans(\bmu)$ to remain unchanged under sufficiently small feasible perturbations. Equivalently, $\bmu$ lies in the interior of the set of instances sharing its answer.


\section{A Unified Lower Bound for Identification}
\label{sec:lower-bound}

In this section, we derive a characteristic-time lower bound that holds uniformly across preference relations. We first describe the answer-changing alternatives that any algorithm must rule out. For an instance $\bmu\in\Theta$, define its set of feasible alternatives as
\begin{equation}
    \Alt_{\preceq}(\bmu)
    =\{\bnu\in\Theta:\ans(\bnu)\ne\ans(\bmu)\}.
    \label{eq:alternatives}
\end{equation}
A $\delta$-correct algorithm must distinguish $\bmu$ from every instance in $\Alt_{\preceq}(\bmu)$. Under the Gaussian observation model, each sample from arm $i$ contributes a KL divergence of $\|\mu_i-\nu_i\|_2^2/2$ between $\bmu$ and $\bnu$. Hence, under an allocation $w\in\simp K$, the information rate against $\bnu$ is
\[
    \mathcal I(\bmu,\bnu;w)
    =\frac12\sum_{i=1}^K w_i\|\mu_i-\nu_i\|_2^2.
\]
Optimizing the allocation against the worst-case feasible alternative yields the characteristic information rate, whose reciprocal is the characteristic time:
\begin{equation}
    \mathcal I_{\preceq}^{*}(\bmu)
    =\sup_{w\in\simp K}
      \inf_{\bnu\in\Alt_{\preceq}(\bmu)}
      \mathcal I(\bmu,\bnu;w),
    \qquad
    T_{\preceq}(\bmu)=\frac1{\mathcal I_{\preceq}^{*}(\bmu)}.
    \label{eq:characteristic-time}
\end{equation}
We restrict attention to instances with nonempty alternative sets. Together with local stability (Assumption~\ref{ass:local-stability}), this guarantees
$0<\mathcal I_{\preceq}^{*}(\bmu)<\infty$.

\begin{theorem}[Unified characteristic-time lower bound]
\label{thm:lower-bound}
For any $\delta\in(0,1/2)$, any algorithm that is $\delta$-correct on $\Theta$, and any $\bmu\in\Theta$,
\begin{equation}
    \E_{\bmu}[\tau_\delta]
    \ge T_{\preceq}(\bmu)\kl(1-\delta,\delta)
    \quad
    \Rightarrow
    \quad
    \liminf_{\delta\downarrow0}
    \frac{\E_{\bmu}[\tau_\delta]}{\log(1/\delta)}
    \ge T_{\preceq}(\bmu),
    \label{eq:lower-bound}
\end{equation}
where $\kl(p,q)=p\log(p/q)+(1-p)\log((1-p)/(1-q))$ is the binary relative entropy.
\end{theorem}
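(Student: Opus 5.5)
The plan is to follow the standard change-of-measure argument of Garivier and Kaufmann. We fix $\bmu\in\Theta$, a $\delta$-correct algorithm, and an arbitrary alternative $\bnu\in\Alt_{\preceq}(\bmu)$. If $\E_{\bmu}[\tau_\delta]=\infty$ there is nothing to prove, so we assume it is finite. Let $N_i(\tau_\delta)$ denote the number of pulls of arm $i$ before stopping. The transportation lemma (Kaufmann, Capp\'e and Garivier, 2016, Lemma 1) gives, for any event $E\in\mathcal F_{\tau_\delta}$,
\[
\sum_{i=1}^K \E_{\bmu}[N_i(\tau_\delta)]\,\frac{\|\mu_i-\nu_i\|_2^2}{2}\;\ge\;\kl\bigl(\Prob_{\bmu}(E),\Prob_{\bnu}(E)\bigr).
\]
The per-sample KL term is $\|\mu_i-\nu_i\|_2^2/2$ because the noise is $\mathcal N(0,I_M)$ and a full vector is observed. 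We prove this inequality by Wald's identity applied to the log-likelihood ratio, combined with the data-processing inequality. Finiteness of $\E_{\bmu}[\tau_\delta]$ justifies Wald's identity.

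Next we take $E=\{\widehat A_\delta=\ans(\bmu)\}$. By $\delta$-correctness under $\bmu$, $\Prob_{\bmu}(E)\ge1-\delta$. Since $\bnu\in\Theta$ and $\ans(\bnu)\ne\ans(\bmu)$, correctness under $\bnu$ gives $\Prob_{\bnu}(E)\le\Prob_{\bnu}(\widehat A_\delta\ne\ans(\bnu))\le\delta$. Because $\delta<1/2$, monotonicity of $\kl$ in each argument on the relevant side yields $\kl(\Prob_{\bmu}(E),\Prob_{\bnu}(E))\ge\kl(1-\delta,\delta)$. This holds for every $\bnu$ in the alternative set.

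We then set $w_i=\E_{\bmu}[N_i(\tau_\delta)]/\E_{\bmu}[\tau_\delta]\in\simp K$. Dividing by $\E_{\bmu}[\tau_\delta]$ gives $\E_{\bmu}[\tau_\delta]\,\mathcal I(\bmu,\bnu;w)\ge\kl(1-\delta,\delta)$ for all $\bnu\in\Alt_{\preceq}(\bmu)$. Taking the infimum over $\bnu$ and bounding by the supremum over $w$ gives
\[
\E_{\bmu}[\tau_\delta]\,\mathcal I^*_{\preceq}(\bmu)\ge\kl(1-\delta,\delta).
\]
Since $0<\mathcal I^*_{\preceq}(\bmu)<\infty$, this rearranges to the first inequality. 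For the second, we use $\kl(1-\delta,\delta)\ge\log(1/(2.4\delta))$, or directly $\kl(1-\delta,\delta)/\log(1/\delta)\to1$, and take the $\liminf$.

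The main obstacle is establishing the transportation inequality rigorously for vector Gaussian observations and a general stopping rule. This requires care with $\mathcal F_{\tau_\delta}$-measurability of the recommendation and with the case of infinite expected stopping time. I would handle it by citing the lemma and checking only the Gaussian KL computation. No local stability is needed for the bound itself; Assumption~\ref{ass:local-stability} only ensures that the right-hand side is finite and nontrivial.
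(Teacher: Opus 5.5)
Your proposal is correct and follows the paper's proof essentially step for step. Both arguments use the same event $E=\{\widehat A_\delta=\ans(\bmu)\}$, the same change-of-measure lemma of Kaufmann et al., the same averaged allocation $w_i=\E_{\bmu}[N_i(\tau_\delta)]/\E_{\bmu}[\tau_\delta]$, and the same limit $\kl(1-\delta,\delta)/\log(1/\delta)\to1$. The paper also notes one point you leave implicit: $\E_{\bmu}[\tau_\delta]>0$ is needed before dividing, and it follows from the transportation inequality because $\kl(1-\delta,\delta)>0$.
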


Theorem~\ref{thm:lower-bound} applies uniformly to all preference relations because the order enters only through the answer-changing set $\Alt_{\preceq}(\bmu)$ and the resulting characteristic time $T_{\preceq}(\bmu)$. By a standard change-of-measure argument, any $\delta$-correct algorithm must accumulate an expected divergence of at least $\kl(1-\delta,\delta)$ between $\bmu$ and every answer-changing alternative~\citep{kaufmann2016complexity,garivier2016optimal}. Under any sampling allocation, the worst-case information gained per sample is at most $\mathcal I_{\preceq}^{*}(\bmu)$. Therefore, at least $T_{\preceq}(\bmu)\kl(1-\delta,\delta)$ samples are needed in expectation, which yields Eq.~(\ref{eq:lower-bound}). The asymptotic statement follows from $\kl(1-\delta,\delta)\sim\log(1/\delta)$ as $\delta\downarrow0$.

\paragraph{Two-arm geometry.}
For $K=2$, suppose that both $\Theta$ and the answer depend only on the difference between the two arm means. Define the set of feasible differences as
\[
    \mathcal D
    =\{\mu_1-\mu_2:\bmu=(\mu_1,\mu_2)\in\Theta\}
    \subseteq\R^M.
\]
For a fixed instance $\bmu=(\mu_1,\mu_2)\in\Theta$, let
$d=\mu_1-\mu_2\in\mathcal D$. With a slight abuse of notation, define
\[
    T_{\preceq}(d)=T_{\preceq}(\bmu),
    \qquad
    \Alt_{\preceq}(d)
    =\{\nu_1-\nu_2:\bnu=(\nu_1,\nu_2)
      \in\Alt_{\preceq}(\bmu)\}
    \subseteq\mathcal D.
\]
Both quantities are well defined and invariant under a common translation of the two arm means.

\begin{lemma}[Two-arm distance formula]
\label{lem:two-arm-distance}
If $d\in\mathcal D$ and $0<\dist(d,\Alt_{\preceq}(d))<\infty$, then
\begin{equation}
    T_{\preceq}(d)
    =\frac8{\dist^2(d,\Alt_{\preceq}(d))}.
    \label{eq:distance-time}
\end{equation}
\end{lemma}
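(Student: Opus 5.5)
Write $w=(w_1,w_2)$ and $\bnu=(\nu_1,\nu_2)$, and put $e=\nu_1-\nu_2$. The plan is to compute the inner infimum in Eq.~(\ref{eq:characteristic-time}) in closed form for each fixed $w$. Because $\Theta$ and the answer depend only on differences, membership $\bnu\in\Alt_{\preceq}(\bmu)$ depends only on $e$, and it holds exactly when $e\in\Alt_{\preceq}(d)$. Moreover, every $\bnu$ with a prescribed difference $e$ lies in $\Theta$; this is the translation-invariance reading of the hypothesis, and I will make it explicit. The infimum therefore splits into two stages: an outer infimum over $e\in\Alt_{\preceq}(d)$, and an inner infimum over all pairs $(\nu_1,\nu_2)$ with $\nu_1-\nu_2=e$ of $\tfrac12\bigl(w_1\|\mu_1-\nu_1\|_2^2+w_2\|\mu_2-\nu_2\|_2^2\bigr)$.

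\textbf{Step 1 (inner projection).} Set $a=\mu_1-\nu_1$ and $b=\mu_2-\nu_2$. The constraint becomes $a-b=d-e=:\Delta$, and we minimize $w_1\|a\|^2+w_2\|b\|^2$ subject to it. Lagrange multipliers, or a coordinatewise weighted-harmonic computation, give the minimizer $a=\tfrac{w_2}{w_1+w_2}\Delta$, $b=-\tfrac{w_1}{w_1+w_2}\Delta$. The minimum value is $\tfrac{w_1w_2}{w_1+w_2}\|\Delta\|^2=w_1w_2\|d-e\|^2$, using $w_1+w_2=1$. If $w_1w_2=0$, the value $0$ is approached by moving only the unsampled arm, which is consistent with the formula.

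\textbf{Step 2 (outer infimum).} The previous step gives
\[
\inf_{\bnu\in\Alt_{\preceq}(\bmu)}\mathcal I(\bmu,\bnu;w)=\tfrac12 w_1w_2\inf_{e\in\Alt_{\preceq}(d)}\|d-e\|_2^2=\tfrac12 w_1w_2\,\dist^2(d,\Alt_{\preceq}(d)).
\]
The distance factor does not depend on $w$, so the supremum over $w\in\simp 2$ is attained at $w_1=w_2=1/2$, where $w_1w_2=1/4$. This yields $\mathcal I_{\preceq}^*(\bmu)=\dist^2/8$. The hypothesis $0<\dist<\infty$ guarantees that this rate is finite and positive, so $T_{\preceq}(d)=8/\dist^2$.

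\textbf{Main obstacle.} The delicate point is justifying that the inner minimizer $(\nu_1,\nu_2)$ stays feasible in $\Theta$. I will read ``$\Theta$ depends only on the difference'' as saying that $\Theta=\{\bnu:\nu_1-\nu_2\in\mathcal D\}$. Under that reading, any pair with difference $e\in\Alt_{\preceq}(d)\subseteq\mathcal D$ belongs to $\Theta$. The answer then equals that of any instance with difference $e$, which differs from $\ans(\bmu)$ by definition of $\Alt_{\preceq}(d)$. Everything else is routine algebra.
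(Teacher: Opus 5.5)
Your proposal is correct and follows essentially the same route as the paper. Both read the hypothesis as $\Theta=\Theta(\mathcal D)$ to make every pair with a prescribed difference feasible, split the infimum fiberwise over $e\in\Alt_{\preceq}(d)$, and find the same inner minimizer $a=w_2(d-e)$, $b=-w_1(d-e)$; both then maximize $w_1w_2/2$ at $w=(1/2,1/2)$. The only cosmetic difference is that the paper obtains the inner minimum from the identity $\alpha\|a\|_2^2+(1-\alpha)\|b\|_2^2=\alpha(1-\alpha)\|a-b\|_2^2+\|\alpha a+(1-\alpha)b\|_2^2$ rather than from Lagrange multipliers, which shows directly that the infimum is attained even when $w_1w_2=0$; in your edge case, \emph{approached} can therefore be strengthened to \emph{attained}.
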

Lemma~\ref{lem:two-arm-distance} gives a geometric interpretation: identification complexity is determined by the distance from $d$ to the nearest answer-changing difference. When several objectives must be perturbed to change the answer, their contributions add up in the squared distance, reducing the characteristic time. We next use this formula to compare preference relations.

\section{Multi-Objective Identification Gains}
\label{sec:advantages}

In this section, we use the characteristic time to determine when multiple objectives facilitate identification. Specifically, we address three questions: (i) Can a larger optimal set be easier to identify? (ii) How much can vector feedback reduce the complexity of a fixed identification task? (iii) When can a multi-objective task be easier than the best corresponding single-objective benchmark?

\subsection{Larger Optimal Sets Can Be Easier to Identify}
\label{sec:larger-optimal-set}

We first compare Pareto and lexicographic identification on the same two-arm instances. The following result shows that identification difficulty is not determined by the size of the optimal set.

\begin{proposition}[A larger optimal set can be easier to identify]
\label{prop:larger-optimal-set}
Consider the unrestricted two-arm model (Appendix~\ref{app:two-arm-comparisons}). Let $M\ge4$ be even and $\Delta>0$. For $d=(\Delta\mathbf1_{M/2},-\Delta\mathbf1_{M/2})$, the Pareto-optimal set is $\{1,2\}$ and the lexicographic optimum is $\{1\}$. Nevertheless,
\begin{equation}
    T_{\parord}(d)=\frac{16}{M\Delta^2},
    \qquad
    T_{\lex}(d)=\frac8{\Delta^2}
    \quad
    \Rightarrow
    \quad
    \frac{T_{\lex}(d)}{T_{\parord}(d)}=\frac M2.
    \label{eq:pareto-easier}
\end{equation}
Conversely, for $d=(1,\ldots,1,\epsilon)$ with $0<\epsilon<1$, both preferences identify arm $1$ as optimal, but
\begin{equation}
    T_{\parord}(d)=\frac8{\epsilon^2},
    \qquad
    T_{\lex}(d)=8
    \quad
    \Rightarrow
    \quad
    \frac{T_{\lex}(d)}{T_{\parord}(d)}=\epsilon^2.
    \label{eq:pareto-harder}
\end{equation}
\end{proposition}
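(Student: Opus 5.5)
The plan is to reduce every claim to Lemma~\ref{lem:two-arm-distance}. In the unrestricted two-arm model we have $\mathcal D=\R^M$, and the answer depends only on $d=\mu_1-\mu_2$. For each preference we will write the answer map explicitly as a function of $d$. We then identify $\Alt_{\preceq}(d)$ and compute $\dist(d,\Alt_{\preceq}(d))$ by giving matching upper and lower bounds. The upper bound comes from an explicit sequence of alternatives. The lower bound comes from a necessary coordinate condition that every alternative must satisfy. Once the distance is known to be finite and positive, we substitute it into $T=8/\dist^2$.

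\emph{Pareto.} We have $\ans(d)=\{1\}$ if $d\in\R_+^M\setminus\{0\}$, and $\ans(d)=\{2\}$ if $-d\in\R_+^M\setminus\{0\}$. In every other case, including $d=0$, $\ans(d)=\{1,2\}$.

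For $d=(\Delta\mathbf1_{M/2},-\Delta\mathbf1_{M/2})$ the coordinates have mixed signs, so the answer is $\{1,2\}$. The alternatives are exactly the two punctured orthants. The Euclidean projection of $d$ onto $\R_+^M$ zeroes the $M/2$ negative coordinates. This gives squared distance $M\Delta^2/2$, which is approached by points in $\R_+^M\setminus\{0\}$ because the projection is nonzero. The nonpositive orthant is at the same distance by symmetry. Hence $T_{\parord}(d)=8/(M\Delta^2/2)=16/(M\Delta^2)$.

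For $d=(1,\ldots,1,\epsilon)$ the answer is $\{1\}$. Any alternative $d'$ must have some coordinate $d'_m\le 0$: either $d'=0$, or $d'$ has a negative coordinate. Therefore $\|d-d'\|\ge\min_m d_m=\epsilon$. This bound is approached by $d'=(1,\ldots,1,-\eta)$ as $\eta\downarrow0$, so $T_{\parord}(d)=8/\epsilon^2$.

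\emph{Lexicographic.} We have $\ans(d)=\{1\}$ if $d\succ_{\lex}0$, $\ans(d)=\{2\}$ if $d\prec_{\lex}0$, and $\ans(d)=\{1,2\}$ if $d=0$.

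In both instances $d_1>0$, so the answer is $\{1\}$, consistent with the claims. The key observation is that any $d'$ with $d'_1>0$ satisfies $d'\succ_{\lex}0$. Hence every alternative has $d'_1\le0$, and so $\|d-d'\|\ge d_1$. The point $d'=(0,-\eta,0,\ldots)$ with $\eta\downarrow0$ lies in the alternative set (it is lex-negative) and approaches distance $d_1$. This gives $T_{\lex}=8/d_1^2$, which equals $8/\Delta^2$ for the first instance and $8$ for the second. Dividing the two characteristic times yields the ratios $M/2$ and $\epsilon^2$.

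The main care point is the lower-bound direction. We must argue that every answer-changing $d'$ satisfies the stated coordinate condition, and we must note that the alternative sets are not closed, so the distance is an infimum that is not attained. The lemma is phrased with $\dist$, an infimum, so this causes no difficulty. We also check that the hypothesis $0<\dist<\infty$ of Lemma~\ref{lem:two-arm-distance} holds in each case, which the explicit values confirm.
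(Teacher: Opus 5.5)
Your route matches the paper's: reduce to Lemma~\ref{lem:two-arm-distance}, then get each distance from a coordinatewise lower bound plus an approximating sequence of alternatives. This is how Proposition~\ref{prop:app-two-arm-times} is proved before it is specialized to the two instances. Your Pareto computations are correct, and so is the lexicographic lower bound $\dist(d,\Alt_{\lex}(d))\ge d_1$.

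\textbf{The lexicographic upper bound fails as written.} Your witness $d'=(0,-\eta,0,\ldots,0)$ tends to the origin, so $\|d-d'\|_2\to\|d\|_2$, not $d_1$; in the first instance this limit is $\sqrt M\,\Delta$. With only this witness you get $8/\|d\|_2^2\le T_{\lex}(d)\le 8/d_1^2$. For the first instance that reads $8/(M\Delta^2)\le T_{\lex}(d)\le 8/\Delta^2$, which does not even show $T_{\lex}(d)>T_{\parord}(d)=16/(M\Delta^2)$. Yet that lower bound on $T_{\lex}$ is exactly the content of Eq.~(\ref{eq:pareto-easier}), and the same step is needed for $T_{\lex}(d)=8$ in Eq.~(\ref{eq:pareto-harder}).

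You also cannot rescue a witness with $d'_1=0$, even by copying the remaining coordinates of $d$. Once the first coordinates tie, the comparison passes to the second. Since $d_2>0$ in both instances, every alternative with $d'_1=0$ must have $d'_2\le 0$, so it costs at least $d_1^2+d_2^2$.

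\textbf{The fix.} The infimum must be approached from $d'_1<0$. Take $d'=(-\eta,d_2,\ldots,d_M)$. It is lexicographically negative, lies at distance $d_1+\eta$, and yields $T_{\lex}(d)=8/d_1^2$ as $\eta\downarrow0$; this is the paper's ``replace only $d_1$ by $-\eta$.''

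\textbf{A smaller point.} The unrestricted model in the statement is $\Theta(\mathcal D_{\free})$ with $\mathcal D_{\free}=(\R\setminus\{0\})^M$, not $\R^M$. Its alternative sets are your $\R^M$ alternative sets intersected with $\mathcal D_{\free}$, so your lower bounds transfer unchanged. Your witnesses, however, must be tie-free. In the first Pareto instance, use for example $(\Delta\mathbf1_{M/2},\eta\mathbf1_{M/2})$ instead of the projection $(\Delta\mathbf1_{M/2},\mathbf 0_{M/2})$. The witnesses $(1,\ldots,1,-\eta)$ and the corrected lexicographic one already qualify.
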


In the first instance, Eq.~(\ref{eq:pareto-easier}), each arm outperforms the other on half of the objectives. Under Pareto dominance, changing the answer requires removing one arm from the Pareto-optimal set, which in turn requires closing all $M/2$ coordinate gaps on which that arm is superior. Hence, the nearest Pareto alternative lies at squared distance $M\Delta^2/2$. By contrast, the lexicographic answer is determined solely by the first coordinate, so the nearest alternative lies at squared distance $\Delta^2$. Lemma~\ref{lem:two-arm-distance} therefore yields the factor-$M/2$ separation in Eq.~(\ref{eq:pareto-easier}).

The second instance, Eq.~(\ref{eq:pareto-harder}), shows that this ordering does not hold uniformly. Although both preferences identify arm~$1$ as optimal, the Pareto answer changes as soon as the smallest coordinate gap $\epsilon$ is reversed, whereas the lexicographic answer depends only on the fixed gap in the first coordinate. Consequently, $T_{\lex}(d)/T_{\parord}(d)=\epsilon^{2}$, which can be made arbitrarily small as $\epsilon\downarrow0$. Together, these two constructions show that neither preference relation is uniformly easier than the other, and that identification complexity cannot be inferred from the size of the optimal set alone.

\subsection{The Value of Feedback Depends on the Preference Order}
\label{sec:feedback}

We next compare vector and coordinate feedback for a fixed identification task. The following result shows that the gain from vector feedback can range from none to a factor of $M$.

Under vector feedback, sampling an arm reveals all $M$ objectives, whereas coordinate feedback reveals only one chosen objective of one chosen arm. For an allocation $q\in\simp{KM}$ over arm--objective pairs, define the coordinate-feedback information rate against an alternative $\bnu$ as
\[
    \mathcal I^{\coord}(\bmu,\bnu;q)
    =\frac12\sum_{i=1}^K\sum_{m=1}^M
    q_{i,m}(\mu_{i,m}-\nu_{i,m})^2.
\]
The coordinate-feedback characteristic time $T_{\preceq}^{\coord}$ is defined analogously to~(\ref{eq:characteristic-time}), with $q$ and $\mathcal I^{\coord}$ replacing $w$ and $\mathcal I$, respectively.

\begin{proposition}[The value of feedback depends on the preference order]
\label{prop:feedback-gain}
Every instance $\bmu$ with $\Alt_{\preceq}(\bmu)\ne\emptyset$ satisfies
\begin{equation}
    T_{\preceq}(\bmu)
    \le T_{\preceq}^{\coord}(\bmu)
    \le M T_{\preceq}(\bmu).
    \label{eq:feedback-gain}
\end{equation}
In the unrestricted two-arm model with $d=\Delta\mathbf1_M$ and $\Delta>0$, the lexicographic and Pareto orders attain the lower and upper endpoints, respectively:
\begin{equation}
    T_{\lex}(d)
    =T_{\parord}(d)
    =\frac8{\Delta^2},
    \qquad
    T_{\lex}^{\coord}(d)
    =\frac8{\Delta^2},
    \qquad
    T_{\parord}^{\coord}(d)
    =\frac{8M}{\Delta^2}.
    \label{eq:feedback-endpoints}
\end{equation}
\end{proposition}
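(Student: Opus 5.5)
The general inequality \eqref{eq:feedback-gain} follows from comparing allocations in the two feedback models. The endpoint cases \eqref{eq:feedback-endpoints} follow from Lemma~\ref{lem:two-arm-distance} for vector feedback and a direct two-arm computation for coordinate feedback.

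\emph{General sandwich.} For the left inequality, take any $q\in\simp{KM}$ and define $w_i=\sum_{m}q_{i,m}$, so $w\in\simp K$. Since $q_{i,m}\le w_i$ for every $m$,
\[
\mathcal I^{\coord}(\bmu,\bnu;q)\le \mathcal I(\bmu,\bnu;w)\quad\text{for every }\bnu.
\]
Both rates are optimized against the same set $\Alt_{\preceq}(\bmu)$. Taking the infimum over $\bnu$ and then the supremum over $q$ gives $\mathcal I^{\coord,*}_{\preceq}\le\mathcal I^*_{\preceq}$, that is, $T_{\preceq}\le T^{\coord}_{\preceq}$.

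For the right inequality, take any $w\in\simp K$ and set $q_{i,m}=w_i/M$. Then $\mathcal I^{\coord}(\bmu,\bnu;q)=\mathcal I(\bmu,\bnu;w)/M$ identically in $\bnu$. Hence $\mathcal I^{\coord,*}_{\preceq}\ge \mathcal I^*_{\preceq}/M$, which is $T^{\coord}_{\preceq}\le MT_{\preceq}$.

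\emph{Vector feedback at $d=\Delta\mathbf1_M$.} Both orders return $\{1\}$. For the lexicographic order, any difference whose first coordinate is negative changes the answer; $(-\eta,\Delta,\ldots,\Delta)$ is such a difference with $\eta\downarrow0$. Conversely, any answer-changing difference (including $0$, which gives answer $\{1,2\}$) has first coordinate $\le0$, so its distance from $d$ is at least $\Delta$. Hence $\dist(d,\Alt_{\lex}(d))=\Delta$.

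For the Pareto order, the answer is $\{1\}$ exactly when the difference is $\ge0$ coordinatewise and nonzero. Alternatives therefore have some coordinate $\le 0$, and flipping a single coordinate slightly below $0$ shows the distance is exactly $\Delta$. Lemma~\ref{lem:two-arm-distance} then gives $T_{\lex}(d)=T_{\parord}(d)=8/\Delta^2$.

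\emph{Coordinate feedback, two arms.} I will use the elementary fact that moving the difference in coordinate $m$ by $x$ is cheapest when the shift is split between the two arms optimally. With $a=q_{1,m}$ and $b=q_{2,m}$, this minimal cost is $\tfrac12x^2\,ab/(a+b)$, and alternatives only need to change the relevant coordinates.

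For the lexicographic order, only coordinate $1$ must be moved, by at least $\Delta$ in the limit. Putting $q_{1,1}=q_{2,1}=1/2$ gives rate $\Delta^2/8$, so $T^{\coord}_{\lex}\le 8/\Delta^2$. The left inequality of \eqref{eq:feedback-gain} gives the matching lower bound, hence equality.

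For the Pareto order, the adversary may flip whichever coordinate $m$ minimizes $c_m=q_{1,m}q_{2,m}/(q_{1,m}+q_{2,m})$, obtaining rate $\tfrac12\Delta^2\min_m c_m$. Since $ab/(a+b)\le (a+b)/4$, we have $\sum_m c_m\le 1/4$, so $\min_m c_m\le 1/(4M)$. Thus every $q$ has rate at most $\Delta^2/(8M)$, giving $T^{\coord}_{\parord}\ge 8M/\Delta^2$. The right inequality of \eqref{eq:feedback-gain} gives equality.

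The main obstacle I expect is rigor in these alternative computations. Alternative sets are not closed (the answer changes only once a coordinate is strictly negative, or at ties), so every distance is an infimum attained only in the limit, and I must check that ties such as $\nu_{1,m}=\nu_{2,m}$ genuinely change the answer. The harmonic-mean bound for the adversary's best response is the other place needing care.
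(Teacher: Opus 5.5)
Your proposal is correct and follows the paper's proof essentially step for step. It uses the same $w_i=\sum_m q_{i,m}$ and $q_{i,m}=w_i/M$ sandwich, the same per-coordinate reversal cost $\tfrac12x^2\,q_{1,m}q_{2,m}/(q_{1,m}+q_{2,m})$ bounded via $ab/(a+b)\le(a+b)/4$, and the same distance computations that the paper packages in Proposition~\ref{prop:app-two-arm-times}. The only difference is that you prove one direction of each coordinate-feedback value and close the other with the sandwich, whereas the paper computes $\max_{p}\min_m p_m=1/M$ and $\max_p p_1=1$ directly. Your concern about ties does not arise, since $\Theta(\mathcal D_{\free})$ excludes them; you should, however, state the convention $c_m=0$ when $q_{1,m}+q_{2,m}=0$.
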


Proposition~\ref{prop:feedback-gain} shows that the benefit of vector feedback depends fundamentally on the preference order, ranging from no gain to an $M$-fold reduction in characteristic time. Under Pareto dominance, every objective can change the optimal set, so coordinate feedback must spread samples across all $M$ objectives, whereas vector feedback observes them simultaneously and achieves the full factor-$M$ improvement. Under the lexicographic order, only the first objective matters, so coordinate feedback can concentrate entirely on that objective and match vector feedback. Hence, richer feedback helps only when the preference order makes multiple objectives informative for identification.

\subsection{When Multiple Objectives Outperform Single-Objective Benchmarks}
\label{sec:scalar-acceleration}

We now compare multi-objective identification with the best single-objective benchmark. Each objective, considered alone, defines a scalar BAI problem, and we ask whether multi-objective identification can require fewer samples than even the easiest of these scalar problems.

For the two-arm gap vector $d=\mu_1-\mu_2\in\mathbb R^M$, assume that $d_m\ne0$ for all $m$. Objective $m$ alone then has characteristic time $8/d_m^2$. With $\Delta_{\max}(d)=\max_{m\in[M]}|d_m|$, we define the best single-objective benchmark and the corresponding multi-objective gain as
\[
\begin{aligned}
    T_{\mathrm{best}}(d)
    =\min_{m\in[M]}\frac8{d_m^2}
    =\frac8{\Delta_{\max}^2(d)},
    \quad
    \mathsf{Gain}_{\preceq}(d)
    =\frac{T_{\mathrm{best}}(d)}{T_{\preceq}(d)}.
\end{aligned}
\]
A gain greater than one means that the multi-objective task has a smaller characteristic time than every single-objective task. Since the multi-objective and single-objective tasks may have different answers, this is a comparison between tasks, not between feedback models for the same task.

\begin{proposition}[Acceleration under three preference orders]
\label{prop:scalar-acceleration}
Let $d\in\R^M$ have no zero coordinates, and define $S_+(d)=\sum_{m:d_m>0}d_m^2$ and $S_-(d)=\sum_{m:d_m<0}d_m^2$. Under vector feedback, strict improvement over the best single-objective benchmark occurs in the following regimes:
\begin{equation}
\begin{aligned}
&\mathsf{Gain}_{\parord}(d)
=\frac{\min\{S_+(d),S_-(d)\}}{\Delta_{\max}^2(d)}>1,
&&\text{if }\min\{S_+(d),S_-(d)\}>\Delta_{\max}^2(d),\\
&\mathsf{Gain}_{v}(d)
=\frac{(v^\top d)^2}{\|v\|_2^2\Delta_{\max}^2(d)}>1,
&&\text{if }|v^\top d|>\|v\|_2\Delta_{\max}(d),\\
&\mathsf{Gain}_{\lex}(d)
=\frac{\|d\|_2^2}{\Delta_{\max}^2(d)}>1,
&&\text{if }d\in\mathcal D
=\R_{++}^M\cup(-\R_{++}^M),\ M\ge2.
\end{aligned}
\label{eq:multi-objective-gains}
\end{equation}
\end{proposition}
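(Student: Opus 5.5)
The plan is to reduce each of the three claims to a distance computation through Lemma~\ref{lem:two-arm-distance}. For each order we find $\dist^2(d,\Alt_{\preceq}(d))$, so that $T_{\preceq}(d)=8/\dist^2$ and $\mathsf{Gain}_{\preceq}(d)=\dist^2(d,\Alt_{\preceq}(d))/\Delta_{\max}^2(d)$. Pareto and linear scalarization are treated in the unrestricted two-arm model. The lexicographic case uses the restricted model $\mathcal D=\R_{++}^M\cup(-\R_{++}^M)$ stated in the proposition. Each strict-inequality condition is then just a restatement of $\dist^2>\Delta_{\max}^2$.

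\emph{Pareto.} The answer depends only on the signs of the coordinates of $d$. It equals $\{1\}$ if $d\succeq 0$ with $d\ne0$, equals $\{2\}$ if $d\preceq0$ with $d\ne 0$, and equals $\{1,2\}$ when $d$ has coordinates of both signs (or $d=0$). Since the gain is claimed to equal $\min\{S_+,S_-\}/\Delta_{\max}^2$, the intended regime is the one in which both $S_+$ and $S_-$ are positive, so the answer is $\{1,2\}$.
\begin{itemize}
\item The closure of the alternative set is $\R_+^M\cup\R_-^M$.
\item The projection of $d$ onto $\R_-^M$ zeroes out the positive coordinates, so the squared distance to $\R_-^M$ is $S_+$.
\item Symmetrically, the squared distance to $\R_+^M$ is $S_-$.
\item Points in the closure but outside the alternative set, such as $d=0$, are limits of alternatives, so they do not change the infimum.
\end{itemize}
This gives $\dist^2=\min\{S_+,S_-\}$. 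If $d$ has a single sign pattern, the same formula still holds with one term zero. In that case the nearest alternative is obtained by flipping the smallest coordinate, giving $\min_m d_m^2<\Delta_{\max}^2$, so no gain is possible, which is consistent with the claim.

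\emph{Linear scalarization.} The answer is determined by the sign of $v^\top d$, and the nontrivial case is $v^\top d\ne0$. The alternative set is the closed half-space on the other side of the hyperplane $\{x:v^\top x=0\}$. This allows for the tie convention, under which both arms are maximal on the hyperplane, which also changes the answer. The squared Euclidean distance to this half-space is $(v^\top d)^2/\|v\|_2^2$, which yields the stated gain.

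\emph{Lexicographic.} Within $\mathcal D=\R_{++}^M\cup(-\R_{++}^M)$, the answer is $\{1\}$ on the positive orthant and $\{2\}$ on the negative orthant. If $d\in\R_{++}^M$, the alternatives are $-\R_{++}^M$, whose closure is $\R_-^M$. The distance is attained in the limit, giving squared distance $\sum_m d_m^2=\|d\|_2^2$, and the case $d\in-\R_{++}^M$ is symmetric. The inequality $\|d\|_2^2>\Delta_{\max}^2$ holds strictly whenever $M\ge2$, because all coordinates are nonzero.

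\emph{Main obstacle.} The hardest step is the bookkeeping needed for the alternative sets to be what we claim. This means handling ties and boundary points, where the answer can change (for example $v^\top\nu=0$ or $\nu_m=0$). It also means verifying that the infimum over the alternative set equals the distance to its closure, and that $0<\dist<\infty$ so that Lemma~\ref{lem:two-arm-distance} applies. Finally, the Pareto formula must be matched to the mixed-sign regime, since that is where $\min\{S_+,S_-\}$ can exceed $\Delta_{\max}^2$.
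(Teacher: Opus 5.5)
Your proposal is correct and takes the same route as the paper: it reduces each gain to $\dist^2(d,\Alt_{\preceq}(d))/\Delta_{\max}^2(d)$ via Lemma~\ref{lem:two-arm-distance}, computes the distances to $\R_+^M\cup(-\R_+^M)$, to a halfspace, and to the closure of $-\R_{++}^M$ in Proposition~\ref{prop:app-two-arm-times}, and uses that $\min\{S_+(d),S_-(d)\}>\Delta_{\max}^2(d)>0$ forces mixed signs. One small slip lies outside the scope of the statement: in the single-sign Pareto case the distance is $\min_m d_m^2$ rather than the mixed-sign formula with one term zero, and it only satisfies $\min_m d_m^2\le\Delta_{\max}^2(d)$, with equality when all $|d_m|$ coincide.
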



Proposition~\ref{prop:scalar-acceleration} shows that multi-objective acceleration arises when changing the answer requires coordinated changes across several objectives, so that their information contributions accumulate under vector feedback. Under Pareto dominance, this occurs when the two arms are incomparable and both sign groups have sufficiently large aggregate gaps, i.e., $\min\{S_+(d),S_-(d)\}>\Delta_{\max}^2(d)$. The gain is largest when the positive and negative gaps are well balanced across objectives, and it can reach $\lfloor M/2\rfloor$. Under linear scalarization, acceleration occurs when the weighted gaps reinforce one another, i.e., $|v^\top d|>\|v\|_2\Delta_{\max}(d)$. Thus, aligned gaps promote acceleration, whereas cancellation across objectives reduces or eliminates it; the maximum gain is $M$.

Under lexicographic preferences, acceleration stems from the common-ranking constraint $\mathcal D=\R_{++}^M\cup(-\R_{++}^M)$. This constraint forces every answer-changing alternative to reverse the ordering on all objectives simultaneously, so the relevant separation scales with $\|d\|_2^2$. Consequently, strict acceleration occurs for every $M\ge2$, and the gain reaches $M$ when all coordinate gaps have equal magnitude. Without this constraint, reversing the first objective alone suffices to change the lexicographic answer, and the multi-objective acceleration disappears (Appendix~\ref{app:scalar-acceleration}).

\section{Attaining the Characteristic Time}
\label{sec:algorithm}

In this section, we show that the characteristic-time lower bound in Theorem~\ref{thm:lower-bound} can be attained asymptotically. Building on the Track-and-Stop framework \citep{garivier2016optimal,degenne2019multiple}, we propose \textsc{OATS}, an order-aware procedure that repeatedly estimates the current answer, allocates samples to distinguish it from its closest alternatives, and stops once sufficient evidence has been collected. The complete procedure is given in Algorithm~\ref{alg:order-aware}.

We first introduce some notation. Let $\mathcal A = \{\ans(\bmu):\bmu\in\Theta\}$ denote the set of possible answers, and let $\Theta_A=\{\bnu\in\Theta:\ans(\bnu)=A\}$ denote the region of instances with answer $A$. For any estimate
$\widehat \bmu=(\widehat\mu_1,\ldots,\widehat\mu_K)\in\R^{K\times M}$ and allocation
$w\in\simp K$, define
\[
    g_A(\widehat \bmu,w)
    =
    \inf_{\bnu\in\Theta\setminus\Theta_A}
    \mathcal I(\widehat \bmu,\bnu;w)
    =
    \inf_{\bnu\in\Theta\setminus\Theta_A}
    \frac12\sum_{i=1}^K
    w_i\|\widehat\mu_i-\nu_i\|_2^2,
\]
which is the smallest information rate that $w$ provides against instances with a different answer.

\begin{algorithm}[t]
\caption{\textsc{OATS}: Order-Aware Track-and-Stop}
\label{alg:order-aware}
\begin{algorithmic}[1]
\Require Preference relation $\preceq$, model class $\Theta$, confidence $\delta$
\State Pull each arm once; set $t\gets K$ and $S\gets\1_K$
\While{true}
    \State Compute the empirical mean matrix $\widehat{\bmu}_t$
    \State Fit the candidate answer $\widehat A_t$ using
           Eq.~(\ref{eq:empirical-answer})
    \State Evaluate the evidence $Z_t$ against competing answers using
           Eq.~(\ref{eq:stopping-statistic})
    \If{$Z_t>\beta(t,\delta)$}
        \State \Return $\widehat A_t$
    \EndIf
    \State Compute the most informative allocation: $w_t\in\argmax_{w\in\simp K}g_{\widehat A_t}(\widehat{\bmu}_t,w)$
    \State Mix with forced exploration:
           $q_t\gets(1-t^{-1/2})w_t+t^{-1/2}\1_K/K$
    \State Update the cumulative target: $S\gets S+q_t$
    \State Select the arm with the largest deficit:
           $I_{t+1}\gets\argmax_{i\in[K]}\{S_i-N_i(t)\}$
    \State Pull $I_{t+1}$ and update the counts and empirical means
    \State $t\gets t+1$
\EndWhile
\end{algorithmic}
\end{algorithm}

Given a confidence level $\delta\in(0,1)$, \textsc{OATS} first samples every arm once. At each round $t$, it computes the empirical mean matrix $\widehat{\bmu}_t$ and selects the answer whose region is closest to this estimate:
\begin{equation}
    \widehat A_t
    \in
    \argmin_{A\in\mathcal A}
    \inf_{\bnu\in\Theta_A}
    \|\widehat{\bmu}_t-\bnu\|_F^2.
    \label{eq:empirical-answer}
\end{equation}
Let $N_i(t)=\sum_{s=1}^t\mathbb{I}\{I_s=i\}$ denote the number of pulls of arm $i\in[K]$ up to round $t$, and let $N(t)=(N_1(t),\ldots, N_K(t))$. \textsc{OATS} then evaluates the evidence in favor of $\widehat A_t$ through
\begin{equation}
    Z_t
    =
    t\times g_{\widehat A_t}
    \left(
        \widehat{\bmu}_t,
        \frac{N(t)}{t}
    \right)
    =
    \inf_{\bnu\in\Theta\setminus\Theta_{\widehat A_t}}
    \frac12\sum_{i=1}^K
    N_i(t)
    \|\widehat\mu_i(t)-\nu_i\|_2^2.
    \label{eq:stopping-statistic}
\end{equation}
Thus, $Z_t$ measures the information accumulated against the closest instance with a different answer. We use the stopping threshold
\begin{equation}
    \beta(t,\delta)
    =
    \frac {KM}{2}
    +
    \sqrt{
        KM\log\frac{2(t+1)^{K+2}}{\delta}
    }
    +
    \log\frac{2(t+1)^{K+2}}{\delta},
    \label{eq:threshold}
\end{equation}
which controls the estimation error uniformly over time and over all possible sample-count vectors.

If $Z_t>\beta(t,\delta)$, the algorithm stops and returns $\widehat A_t$. Otherwise, \textsc{OATS} computes the most informative allocation 
\begin{equation}
    w_t\in\argmax_{w\in\simp K}g_{\widehat A_t}(\widehat{\bmu}_t,w),
    \label{eq:allocation}
\end{equation}
 which maximizes the estimated information rate against alternatives to the current answer. To ensure sufficient exploration, it then sets
\[
    q_t
    =
    (1-t^{-1/2})w_t
    +
    t^{-1/2}\frac{\1_K}{K}.
\]
The uniform component vanishes asymptotically but guarantees that every arm receives $\Omega(\sqrt{t})$ observations. Next, \textsc{OATS} updates the cumulative target and selects the arm with the largest deficit:
\[
    S\gets S+q_t,
    \qquad
    I_{t+1}
    \in
    \argmax_{i\in[K]}
    \{S_i-N_i(t)\}.
\]
Since $S_i-N_i(t)$ is the sampling deficit of arm $i$ relative to its cumulative target, this tracking rule keeps the realized sample counts within a bounded distance of their targets. Finally, \textsc{OATS} pulls arm $I_{t+1}$ and updates the counts and empirical means.

The following theorem establishes the correctness and asymptotic optimality of \textsc{OATS}.

\begin{theorem}[Asymptotic optimality of \textnormal{OATS}]
\label{thm:optimality}
Suppose every instance in $\Theta$ satisfies Assumption~\ref{ass:local-stability} and has a nonempty alternative set. Then \textnormal{OATS} is $\delta$-correct, and for every $\bmu\in\Theta$,
\begin{equation}
    \lim_{\delta\downarrow0}
    \frac{\E_{\bmu}[\tau_\delta]}{\log(1/\delta)}
    =
    T_{\preceq}(\bmu).
    \label{eq:asymptotic-optimality}
\end{equation}
\end{theorem}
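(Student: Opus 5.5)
The proof has two parts: $\delta$-correctness, which follows from a deviation inequality behind the threshold~(\ref{eq:threshold}), and asymptotic optimality, which follows the Track-and-Stop template of \citet{garivier2016optimal} adapted to set-valued answers. The matching lower bound $\liminf_{\delta\downarrow0}\E_{\bmu}[\tau_\delta]/\log(1/\delta)\ge T_{\preceq}(\bmu)$ is already given by Theorem~\ref{thm:lower-bound}. It therefore suffices to prove the upper bound $\limsup_{\delta\downarrow0}\E_{\bmu}[\tau_\delta]/\log(1/\delta)\le T_{\preceq}(\bmu)$ and correctness.

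\emph{Correctness.} Suppose the algorithm errs, i.e.\ it stops at time $t$ with $\widehat A_t\neq\ans(\bmu)$. Then $\bmu\in\Theta\setminus\Theta_{\widehat A_t}$. By definition of $Z_t$ as an infimum over this set,
\[
Z_t\le\tfrac12\sum_{i=1}^K N_i(t)\|\widehat\mu_i(t)-\mu_i\|_2^2 .
\]
An error therefore implies the event $\{\exists t:\tfrac12\sum_i N_i(t)\|\widehat\mu_i(t)-\mu_i\|_2^2>\beta(t,\delta)\}$.

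For a fixed count vector $n=(n_1,\ldots,n_K)$, the quantity $\sum_i n_i\|\widehat\mu_i-\mu_i\|^2$ is $\chi^2_{KM}$-distributed. The Laurent--Massart bound gives
\[
\Prob\!\left(\tfrac12\chi^2_{KM}>\tfrac{KM}{2}+\sqrt{KMx}+x\right)\le e^{-x}.
\]
We take $x=\log(2(t+1)^{K+2}/\delta)$. At time $t$ there are at most $(t+1)^K$ possible count vectors, so a union bound over count vectors and then over $t$ gives
\[
\sum_t (t+1)^K\cdot\frac{\delta}{2(t+1)^{K+2}}\le\delta .
\]
Since the samples are adaptive, this union over count vectors is taken on the stacked i.i.d.\ reward sequences of each arm, so no martingale argument is needed. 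Finiteness of $\tau_\delta$ almost surely follows from the optimality part.

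\emph{Optimality.} First, forced exploration combined with the tracking rule gives $N_i(t)\ge\sqrt t-K$ for every arm, as in Lemma~7 of \citet{garivier2016optimal}. Hence $\widehat\bmu_t\to\bmu$ almost surely, and on a high-probability event $\mathcal E_T$ we have $\|\widehat\bmu_t-\bmu\|_F\le\xi$ for all $t\ge T^{1/4}$, with $\Prob(\mathcal E_T^c)$ summable in $T$.

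Second, by Assumption~\ref{ass:local-stability}, for $\xi<r_{\bmu}/2$ the closest-region rule~(\ref{eq:empirical-answer}) returns $\widehat A_t=\ans(\bmu)$ on $\mathcal E_T$. The reason is that $\bmu$ has distance at least $r_{\bmu}$ to every other region. Once $\widehat A_t$ is fixed at the true answer, $g_{\widehat A_t}(\cdot,\cdot)=g_{\ans(\bmu)}(\cdot,\cdot)$ is a single fixed function, and $\sup_w g_{\ans(\bmu)}(\bmu,w)=\mathcal I^*_{\preceq}(\bmu)$.

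Third, we need continuity of $(\bnu,w)\mapsto g_A(\bnu,w)$ and upper hemicontinuity of the argmax correspondence $w^*(\bnu)$ near $\bmu$, via Berge's maximum theorem. I expect this to be the main obstacle. $\Theta$ is a general class, so the alternative set need not be compact and the optimal allocation need not be unique.

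\begin{itemize}
\item I would first show that $g_A$ is jointly continuous in $\bnu$ near $\bmu$ and in $w$. Each $\mathcal I(\cdot,\bnu;\cdot)$ is a quadratic form, so the infimum is Lipschitz in the estimate on bounded sets. It is also concave and continuous in $w$, and it stays finite because the alternative set is nonempty.
\item I would then avoid uniqueness of $w^*$ altogether, following \citet{degenne2019multiple}. Instead of convergence of $w_t$, I would use that $N(t)/t$ approaches the convex set $w^*(\bmu)$ up to $\epsilon$, which holds because the cumulative averaging of the targets $q_t$ is tracked. By concavity of $g$ in $w$, this still gives $g(\widehat\bmu_t,N(t)/t)\ge\mathcal I^*_{\preceq}(\bmu)-\epsilon'$.
\end{itemize}

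Finally, on $\mathcal E_T$, for $t\ge T$ we obtain $Z_t\ge t(\mathcal I^*_{\preceq}(\bmu)-\epsilon')$. Since $\beta(t,\delta)=\log(1/\delta)+O(\log t)$, stopping occurs by
\[
T^*(\delta)=\inf\{T: T(\mathcal I^*_{\preceq}(\bmu)-\epsilon')\ge\beta(T,\delta)\}.
\]
Bounding
\[
\E_{\bmu}[\tau_\delta]\le T_0+T^*(\delta)+\sum_T\Prob(\mathcal E_T^c)
\]
then yields $\limsup_{\delta\downarrow0}\E_{\bmu}[\tau_\delta]/\log(1/\delta)\le 1/(\mathcal I^*_{\preceq}(\bmu)-\epsilon')$. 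Letting $\epsilon'\to0$ gives the upper bound $T_{\preceq}(\bmu)$.
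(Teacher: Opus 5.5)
Your proposal is correct. Its skeleton matches the paper's: the $\chi^2$ tail bound with a union over count vectors on stacked per-arm sequences for correctness, local stability forcing $\widehat A_t=\ans(\bmu)$ once $\|\widehat\bmu_t-\bmu\|_F<r_{\bmu}/2$, and bounded tracking plus $\Omega(\sqrt t)$ forced exploration.

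The genuine difference is how you handle non-unique optimal allocations. You go through Berge's theorem. Upper hemicontinuity of $\bx\mapsto\argmax_w g_{A^*}(\bx,w)$ puts $w_t$ in an $\epsilon$-neighbourhood of the convex set $w^*(\bmu)$, and averaging keeps $S(t)/t$, hence $N(t)/t$, in a slightly larger one. \emph{Continuity} of $g(\bmu,\cdot)$ then converts this into $g\ge\Gamma-\epsilon'$; concavity is only what makes $w^*(\bmu)$ convex, not what gives the final inequality.

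The paper never looks at the argmax set. From $\sup_w|g(\widehat\bmu_t,w)-g(\bmu,w)|\le\varepsilon/8$ it gets $g(\bmu,w_t)\ge\Gamma-\varepsilon/4$ for each iterate. Jensen's inequality for the concave, nonnegative $g(\bmu,\cdot)$ then passes this to $S(t)/t$ with a factor $(1-L/t)$, which also absorbs the burn-in period. The paper also replaces your Garivier--Kaufmann events $\mathcal E_T$ with one integrable random time $H_\varepsilon$ and the bound $\tau_\delta\le\max\{H_\varepsilon,n_\delta\}$.

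Your route yields the stronger by-product that $N(t)/t$ converges to the optimal-allocation set. However, it relies on continuity of $g_{A^*}$ in $w$ up to the boundary of the simplex, which Berge's theorem needs and which you assert rather than prove. It does hold, for instance via the paper's bound $h(\bx,w^{(n)})\ge(1-\eta_n)h(\bx,w)$, or because a finite concave function on a polytope is lower semicontinuous. The paper's route is more elementary and works equally with approximate maximizers.

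A few small fixes:
\begin{itemize}
\item With this mixing schedule, forced exploration gives $N_i(t)\ge 2\sqrt t/K-O(K)$, not $\sqrt t-K$.
\item $\beta(t,\delta)$ contains a $\sqrt{KM\log(1/\delta)}$ term, so it is $\log(1/\delta)+o(\log(1/\delta))+O(\log t)$ rather than $\log(1/\delta)+O(\log t)$.
\item ``Stopping occurs by $T^*(\delta)$'' requires either that $\beta(t,\delta)/t$ be eventually decreasing (the paper checks this) or that $T^*(\delta)$ be defined as a last crossing time.
\end{itemize}
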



Theorem~\ref{thm:optimality} shows that \textsc{OATS} is not only $\delta$-correct but also asymptotically instance-optimal: for any fixed instance, its expected stopping time matches the characteristic-time lower bound of Theorem~\ref{thm:lower-bound} to first order as $\delta\downarrow0$. This extends classical Track-and-Stop for scalar BAI~\citep{garivier2016optimal} to preference-induced, possibly set-valued identification problems. Whereas previous extensions have considered multiple valid answers~\citep{degenne2019multiple} or specific multi-objective targets such as Pareto-front identification~\citep{crepon2024pareto}, \textsc{OATS} handles different preference orders and model structures through a single answer-changing formulation. Consequently, the three phenomena of Section~\ref{sec:advantages} are attained by an adaptive procedure and are not mere properties of the lower bound. In particular, whenever the preference relation or structural constraints enlarge the information distance to the nearest answer-changing alternative, \textsc{OATS} asymptotically exploits this additional information. Applying the same analysis to a single known coordinate also shows that the oracle single-objective benchmark is attainable.

\begin{figure*}[t]
  \centering
  \includegraphics[width=\textwidth]{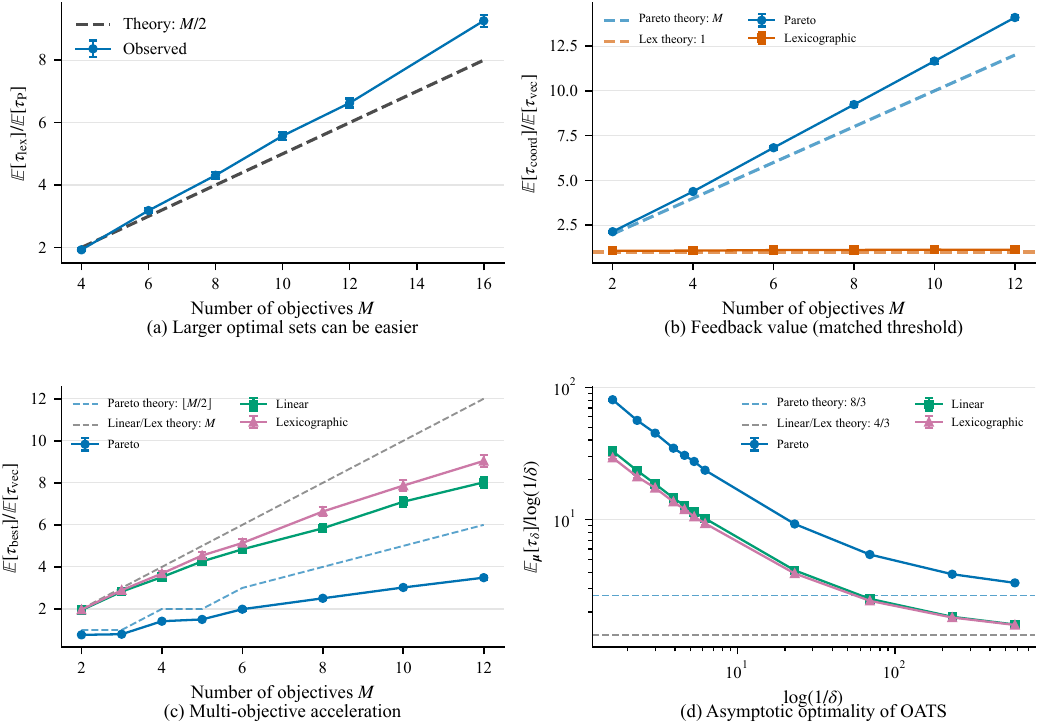}
  \caption{Synthetic results for Propositions~\ref{prop:larger-optimal-set}--\ref{prop:scalar-acceleration} and Theorem~\ref{thm:optimality}. (a) The ratio $\E[\tau_{\lex}]/\E[\tau_{\parord}]$ grows with $M$: Pareto-set identification becomes increasingly easier than lexicographic identification. (b) The feedback gain $\E[\tau_{\mathrm{coord}}]/\E[\tau_{\mathrm{vec}}]$ grows with $M$ under the Pareto order but remains near one under the lexicographic order. (c) The gain $\E[\tau_{\mathrm{best}}]/\E[\tau_{\mathrm{vec}}]$ increases with $M$ under all three orders and stays above one, confirming multi-objective acceleration. (d) The normalized stopping time $\E_{\bmu}[\tau_\delta]/\log(1/\delta)$ decreases toward the corresponding characteristic time as $\delta$ decreases. Markers show empirical means with 95\% confidence intervals, and dashed lines show theoretical predictions.}

  \label{fig:exp-synthetic}
\end{figure*}

\section{Experiments}
\label{sec:experiments}
In this section, we run synthetic experiments to test whether \textsc{OATS} exhibits the identification phenomena of Propositions~\ref{prop:larger-optimal-set}--\ref{prop:scalar-acceleration} and the asymptotic optimality of Theorem~\ref{thm:optimality}. Appendix~\ref{app:experiments} reports supplementary results on instances derived from the HELM Instruct ratings~\citep{Zhang:2024-HELM-Instruct}.

\paragraph{Experimental setup.}
All experiments use independent observations $X_{i,t}\sim\mathcal N(\mu_i,I_M)$. For two-arm instances, we set $\mu_1=d/2$ and $\mu_2=-d/2$. Unless otherwise specified, we use $\Delta=1$ and $\delta=0.05$. Each configuration is evaluated over 500 independent repetitions with deterministic seeds.

Figure~\ref{fig:exp-synthetic}(a) evaluates Proposition~\ref{prop:larger-optimal-set} for even $M\in\{4,6,8,10,12,16\}$ under the mixed-sign gap $d=(\mathbf1_{M/2},-\mathbf1_{M/2})$, for which the proposition predicts $T_{\lex}/T_{\parord}=M/2$. The empirical stopping-time ratio grows approximately linearly in $M$ and closely tracks this prediction, lying slightly above it for larger $M$. These results confirm that the advantage of Pareto-set identification over lexicographic identification scales linearly with the number of objectives.

Figure~\ref{fig:exp-synthetic}(b) validates Proposition~\ref{prop:feedback-gain} with $d=\mathbf1_M$ and $M\in\{2,4,6,8,10,12\}$, comparing vector and coordinate feedback under the Pareto and lexicographic orders. The proposition predicts coordinate-to-vector characteristic-time ratios of $M$ and $1$, respectively. The observed Pareto ratio grows approximately linearly in $M$, whereas the lexicographic ratio stays close to one. Both trends closely follow the predictions, up to a modest finite-confidence gap. These results confirm that the benefit of vector feedback depends critically on the preference order.

Figure~\ref{fig:exp-synthetic}(c) tests Proposition~\ref{prop:scalar-acceleration} for $M\in\{2,3,4,5,6,8,10,12\}$ by comparing vector-feedback identification with a scalar oracle that knows $m^*\in\arg\max_m|d_m|$. We use $d=(\mathbf1_{M/2},-\mathbf1_{M/2})$ for the Pareto order and $d=\mathbf1_M$ for the linear and common-ranking lexicographic orders, with $v=\mathbf1_M/\sqrt M$ for linear scalarization. The stopping-time gain $\mathbb E[\tau_{\mathrm{best}}]/\mathbb E[\tau_{\mathrm{vec}}]$ grows nearly linearly in $M$ for the linear and lexicographic orders and more slowly for the Pareto order, consistent with the theory. At finite confidence, the empirical gains remain below the characteristic-time predictions, particularly for larger $M$, because the thresholds in Eq.~(\ref{eq:threshold}) for the scalar and vector procedures use different dimensions, namely $KM=2$ and $KM=2M$, respectively.

Figure~\ref{fig:exp-synthetic}(d) evaluates the asymptotic optimality of \textsc{OATS} established in Theorem~\ref{thm:optimality}. We use $M=6$, with $d=(1,1,1,-1,-1,-1)$ for the Pareto order and $d=\mathbf1_6$ for the linear and lexicographic orders. We plot the normalized stopping time $\E_{\bmu}[\tau_\delta]/\log(1/\delta)$ for confidence levels $\delta$ ranging from $0.2$ down to $10^{-250}$. The theorem predicts that this quantity converges to $T_{\preceq}(\bmu)=8/3$ for Pareto identification and to $T_{\preceq}(\bmu)=4/3$ for both linear and lexicographic identification. Consistent with this prediction, all three curves decrease monotonically toward their respective limits as $\log(1/\delta)$ grows. The linear and lexicographic curves nearly coincide and approach their common limit of $4/3$, whereas the Pareto curve converges to the higher limit of $8/3$. The empirical results thus agree with the asymptotic characterization in Theorem~\ref{thm:optimality}.

\paragraph{Summary.}
Overall, the synthetic experiments confirm that larger optimal sets can be easier to identify, the benefit of vector feedback depends on the preference order, and multiple objectives can accelerate identification; moreover, the normalized stopping time of \textsc{OATS} approaches the characteristic time. Appendix~\ref{app:experiments} complements these synthetic results with implementation details, HELM-derived Gaussian instances, allocation baselines against uniform sampling and Oracle-Track, and additional robustness and scaling checks.

\section{Conclusion and Future Work}
\label{sec:conclusion}
In this paper, we studied when multiple objectives make fixed-confidence identification easier rather than harder. By casting different preference orders in a common characteristic-time framework, we disentangled the effects of the preference relation, the feasible model class, and the feedback model on identification complexity. Our results show that identification difficulty is governed by the geometry of answer-changing alternatives rather than by the number of objectives or the size of the optimal set alone. In particular, Pareto and lexicographic identification admit no universal hardness ordering, the benefit of vector feedback ranges from no gain to an $M$-fold improvement, and multi-objective identification can be strictly easier than the best corresponding single-objective problem. We further proposed \textsc{OATS}, which asymptotically matches the lower bound and thereby shows that these gains are achievable.

Future directions include extending the analysis beyond Gaussian observations to more general reward and feedback models, such as heavy-tailed noise or partially observed objectives, and developing scalable approximations of the optimal allocation for problems with many arms or objectives.

\newpage

\appendix

\begin{center}
    \LARGE \textbf{Appendix}
\end{center}

\vspace{0.5cm}

\section{Experimental Details and Additional Results}
\label{app:experiments}

\paragraph{Implementation and numerical checks.}
We implement \textsc{OATS} following Algorithm~\ref{alg:order-aware}, including the empirical answer, the stopping statistic, the forced-exploration mixture, and the cumulative-deficit tracking rule. For two-arm experiments under vector feedback, Lemma~\ref{lem:two-arm-distance} and its proof reduce the closest-alternative problem to~(\ref{eq:app-fixed-difference-cost}), which yields the balanced allocation $w=(1/2,1/2)$; the corresponding Pareto, linear, and lexicographic distances are derived in Proposition~\ref{prop:app-two-arm-times} and its proof. For coordinate feedback, we use the information formulas and optimal allocations derived in the proof of Proposition~\ref{prop:feedback-gain}. For linear identification with general $K$, we project each mean vector onto $v/\|v\|_2$, which reduces the allocation problem to scalar Gaussian best-arm identification. For a candidate singleton answer $A=\{i_A\}$, let $c_j=(v^\top(\mu_{i_A}-\mu_j))^2/(2\|v\|_2^2)$ for $j\neq i_A$, so that
\[
 g_A(\bmu,w)=\min_{j\ne i_A}c_j\frac{w_{i_A}w_j}{w_{i_A}+w_j}.
\]
The maximizing weights satisfy $w_j=w_{i_A}\lambda/(c_j-\lambda)$ for $j\neq i_A$, where $\lambda\in(0,\min_{j\ne i_A}c_j)$ is the unique solution of $\sum_{j\ne i_A}(\lambda/(c_j-\lambda))^2=1$, and $w_{i_A}$ is then determined by the normalization $\sum_{k}w_k=1$. Since the left-hand side is strictly increasing in $\lambda$, we solve this equation by bisection on $(0,\min_{j\ne i_A}c_j)$~\citep[Section~4.2.5]{Boyd:2004}; after 45 iterations, the bracketing interval has width at most $2^{-45}\min_{j\ne i_A}c_j$. We cross-check the resulting allocation and information value against a constrained epigraph formulation solved by sequential quadratic programming~\citep[Chapter~18]{Nocedal:2006}, and the two solutions agree to seven decimal places on all test instances. We further verify numerically the two-arm distance formulas, the scalar-acceleration gains, the vector- and coordinate-feedback endpoints, and the predicted scaling laws, and we confirm that instances outside the common-ranking model are correctly rejected. Every simulated trial stopped before reaching the prescribed sample cap, so all reported stopping times are finite.

\paragraph{Seeds, intervals, and baselines.}
For reproducibility, each experimental condition is assigned a fixed base seed, from which independent per-trial seeds are derived. We report 95\% confidence intervals for mean stopping times as the sample mean $\pm 1.96$ standard errors, intervals for ratios of mean stopping times via the delta method for independent samples, and intervals for error rates via the exact two-sided Clopper--Pearson method. The three allocation rules (\textsc{OATS}, Uniform, and Oracle-Track) differ only in their target allocations; they share the same observations, candidate answer, evidence statistic, and stopping threshold. Oracle-Track computes its target allocation from the true mean matrix; it is therefore not implementable in practice and serves only as a reference. In the two-arm vector-feedback setting, all three rules target the same allocation $(1/2,1/2)$, so any observed differences among them are due solely to Monte Carlo variability.

\paragraph{Stopping threshold and coordinate-feedback ablation.}
All simulations stop as soon as $Z_t$ exceeds the threshold
\[
\beta(t,\delta)=\frac{D}{2}+\sqrt{D\log\!\bigl(2(t+1)^{K_{\rm exp}+2}/\delta\bigr)}+\log\!\bigl(2(t+1)^{K_{\rm exp}+2}/\delta\bigr),
\]
where $K_{\rm exp}$ is the number of selectable experiments and $D$ is the total dimension of their mean parameters. The values dictated by our proofs are $(K_{\rm exp},D)=(K,KM)$ for vector feedback and $(KM,KM)$ for coordinate feedback, and we use them throughout, with one exception. In Figure~\ref{fig:exp-synthetic}(b), we also set $K_{\rm exp}=K$ for coordinate feedback, so that both feedback models use the same threshold and the comparison isolates the effect of the feedback model itself.

\subsection{HELM-Derived Model Evaluation}

\paragraph{Data and Gaussian construction.}
We use the publicly released GPT-4 evaluator ratings from CRFM HELM Instruct v1.0.0~\citep{Zhang:2024-HELM-Instruct}. In the HELM-derived Gaussian instances, each arm corresponds to a language model and each reward vector contains $M=5$ objectives. Pairwise comparisons use $K=2$ arms, and the linear experiments use all $K=4$ models. HELM groups prompts into evaluation scenarios, each corresponding to a fixed prompt source or task family. For each scenario $s$, model $i$, and objective $m$, we let $\widehat\mu_{i,m}^{(s)}$ denote the average released rating on objective $m$ over the prompts shared by all four models in that scenario, and we collect these five averages into the mean vector $\widehat\mu_i^{(s)}\in\mathbb R^5$. We also construct a pooled instance by averaging each model's ratings over all 678 prompts retained across the seven scenarios.

To place these means on the scale of the unit-covariance Gaussian model assumed in our theory, we compute all six model-pair Euclidean gaps within each of the seven scenarios and within the pooled instance, giving 48 gaps in total. Let $d_{\rm med}$ denote the median of the nonzero gaps. For the released ratings,
\[
    d_{\rm med}
    =\operatorname{median}_{s,\,i<j:\,\|\widehat\mu_i^{(s)}-\widehat\mu_j^{(s)}\|_2>0}
      \|\widehat\mu_i^{(s)}-\widehat\mu_j^{(s)}\|_2
    \approx 0.7561.
\]
We use the single global multiplier
\[
    \widetilde\mu_i^{(s)}=\gamma\,\widehat\mu_i^{(s)},
    \qquad
    \gamma=d_{\rm med}^{-1}\approx 1.323,
\]
so that the median model-pair separation is one after rescaling. This provides a representative signal scale relative to the identity-covariance Gaussian noise, while the median is less sensitive than the mean to unusually large or small model gaps. The simulations use the unrounded data-derived value of $\gamma$. We then generate fresh observations $X_{i,t}\sim\mathcal N(\widetilde\mu_i^{(s)},I_5)$. Because $\gamma>0$ is applied uniformly to every model and objective, it preserves all Pareto, linear, and lexicographic comparisons; it changes only the absolute signal-to-noise ratio, and hence the difficulty of the Gaussian instances.

Across the seven scenarios and the pooled instance, the six model pairs yield 48 two-arm instances with $M=5$. Figure~\ref{fig:exp-helm}(a) shows the empirical distribution of the theoretical gain over the pairs to which each notion applies, namely Pareto, linear, and common-ranking lexicographic pairs. Most Pareto and linear gains fall below one, whereas the lexicographic distribution is shifted to the right. The large accelerations in our synthetic examples are therefore not typical of HELM mean gaps. Real model pairs rarely exhibit the balanced, mixed-sign gap structure that favors Pareto identification, while common-ranking lexicographic pairs can still aggregate evidence across objectives. Figure~\ref{fig:exp-helm}(b) examines whether the characteristic time predicts stopping times on HELM-derived instances. We fix $\delta=0.05$ and consider the eligible two-arm Pareto instances and the four-arm linear instances. Observed stopping times increase with $T_{\preceq}(\bmu)\log(1/\delta)$, indicating that $T_{\preceq}(\bmu)$ remains a reliable indicator of relative instance difficulty. The points lie well above the $y=x$ reference line because, at this practical confidence level, the finite-confidence threshold is conservative and its lower-order terms are not negligible.

\begin{figure*}[t]
  \centering
  \includegraphics[width=.92\textwidth]{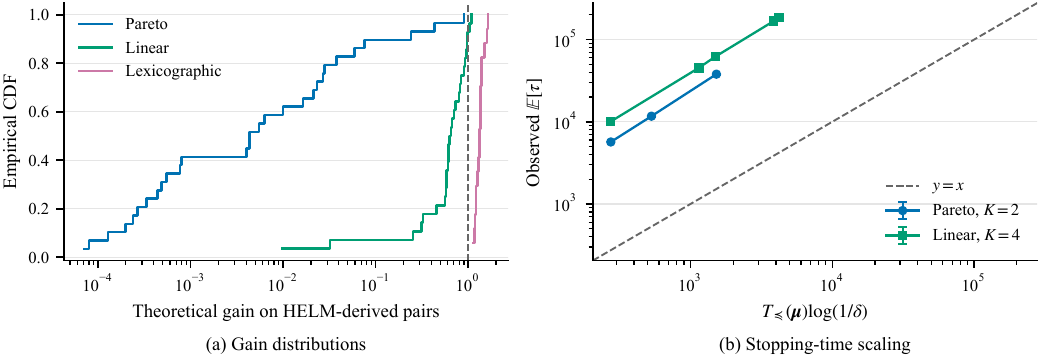}
  \caption{HELM-derived Gaussian instances. (a) Empirical CDF of the theoretical gain across within-scenario model pairs; a gain above $1$ indicates a multi-objective speedup over the best scalar benchmark. (b) Observed stopping time versus $T_{\preceq}(\bmu)\log(1/\delta)$ for eligible Pareto and linear instances, showing that $T_{\preceq}(\bmu)$ captures instance difficulty on HELM-derived geometries.}
  \label{fig:exp-helm}
\end{figure*}

\paragraph{Allocation baselines.}
We compare three allocation rules on a four-arm linear HELM instance. \textsc{OATS} estimates the answer and tracks the allocation returned by Eq.~(\ref{eq:allocation}). Uniform samples all arms equally with the same stopping rule. Oracle-Track, an unattainable reference, computes the optimal allocation from the true mean matrix with the same observations and stopping rule.

\begin{table}[H]
  \centering
  \caption{A four-arm HELM-derived linear instance (Best ChatGPT Prompts scenario, $\delta=0.05$). All methods share the same evidence statistic and stopping threshold. We report the mean stopping time, its 95\% confidence interval, and the number of incorrect answers over 500 trials. Oracle-Track uses the true mean matrix and is not implementable.}
  \label{tab:exp-helm-allocation}
  \begin{tabular}{lccc}
    \hline
    Allocation & Mean $\tau$ & 95\% CI & Errors \\
    \hline
    \textsc{OATS} & $\bm{61{,}797}$ & $[61{,}119,62{,}474]$ & $0/500$ \\
    Uniform & $87{,}600$ & $[86{,}663,88{,}538]$ & $0/500$ \\
    Oracle-Track & $61{,}938$ & $[61{,}289,62{,}588]$ & $0/500$ \\
    \hline
  \end{tabular}
\end{table}
As shown in Table~\ref{tab:exp-helm-allocation}, \textsc{OATS} reduces the mean stopping time by about $29.5\%$ relative to uniform allocation. Its performance is statistically indistinguishable from Oracle-Track, as the two 95\% confidence intervals overlap. All three methods return the correct answer in every trial.

\subsection{Additional Synthetic Experiments}

This subsection reports two additional sets of synthetic experiments. The first varies the gap size, the scalarization weights, and the balance of Pareto coordinates. The second extends linear identification to more than two arms. Unless otherwise stated, all settings follow those of the main experiments.

\begin{table*}[h]
  \centering
  \caption{Additional two-arm synthetic checks ($\delta=0.05$, 500 repetitions).}
  \label{tab:exp-additional-synthetic}
  \begin{tabular}{llcc}
    \toprule
    Check & Setting & $T_{\preceq}(\bmu)$ & Mean $\tau$ \\
    \midrule
    Gap scaling & $\Delta=0.5$ & $10.667$ & $631.2$ \\
    Gap scaling & $\Delta=1$ & $2.667$ & $133.4$ \\
    Gap scaling & $\Delta=2$ & $0.667$ & $28.3$ \\
    Unequal linear weights & $M=4$ & $2.400$ & $94.5$ \\
    Unequal linear weights & $M=8$ & $1.259$ & $58.3$ \\
    Unequal linear weights & $M=12$ & $0.855$ & $45.3$ \\
    Pareto imbalance & $p=1$ & $8.000$ & $447.1$ \\
    Pareto imbalance & $p=2$ & $4.000$ & $208.5$ \\
    Pareto imbalance & $p=3$ & $2.667$ & $131.0$ \\
    Pareto imbalance & $p=4$ & $2.000$ & $106.7$ \\
    \bottomrule
  \end{tabular}
\end{table*}

Table~\ref{tab:exp-additional-synthetic} confirms that empirical stopping times respond to the instance geometry in the same direction as the characteristic time. First, each doubling of the gap scale $\Delta$ divides $T_{\preceq}(\bmu)$ by four, consistent with its inverse-square dependence on the gap. The mean stopping time decreases by a similar factor of about $4.7$, with the difference attributable to lower-order terms in the threshold. Second, with unequal linear weights, increasing the number of objectives enlarges the projected gap and hence reduces both the characteristic time and the stopping time. Third, for Pareto identification with $M=8$ objectives split into sign groups of sizes $p$ and $8-p$, making the two groups more balanced increases the distance to the closest alternative. As a result, $T_{\preceq}(\bmu)$ scales as $1/\min\{p,8-p\}$ and the stopping time decreases accordingly.

\begin{table*}[h]
  \centering
  \caption{Linear identification with larger $K$ ($\delta=0.05$, 500 repetitions). The last three columns report empirical mean stopping times. Oracle-Track uses the true instance and is not implementable.}
  \label{tab:exp-larger-k}
  \begin{tabular}{lccccc}
    \toprule
    $K$ & $M$ & $T_{\preceq}(\bmu)$ & \textsc{OATS} & Uniform & Oracle-Track \\
    \midrule
    $5$ & $3$ & $3.568$ & $302$ & $397$ & $312$ \\
    $5$ & $5$ & $3.568$ & $354$ & $470$ & $367$ \\
    $5$ & $8$ & $3.568$ & $420$ & $561$ & $438$ \\
    $10$ & $3$ & $4.765$ & $792$ & $1586$ & $813$ \\
    $10$ & $5$ & $4.765$ & $933$ & $1876$ & $958$ \\
    $10$ & $8$ & $4.765$ & $1118$ & $2250$ & $1127$ \\
    $20$ & $3$ & $5.792$ & $1941$ & $6726$ & $1967$ \\
    $20$ & $5$ & $5.792$ & $2283$ & $7801$ & $2298$ \\
    $20$ & $8$ & $5.792$ & $2712$ & $9287$ & $2736$ \\
    \bottomrule
  \end{tabular}
\end{table*}

Table~\ref{tab:exp-larger-k} extends linear identification beyond two arms. For $K\in\{5,10,20\}$ and $M\in\{3,5,8\}$, we generate instances with a unique best arm after projection and a fixed projected gap between the best arm and its closest competitor, which avoids near-ties. This comparison isolates the value of adaptive allocation. Uniform uses the same stopping rule but samples all arms equally, whereas Oracle-Track tracks the true optimal allocation and is not implementable. For fixed $K$, $T_{\preceq}(\bmu)$ does not depend on $M$ because the projected gaps are held fixed. The increase in stopping time with $M$ instead comes from the threshold, whose dimension term $D=KM$ grows with $M$. \textsc{OATS} performs on par with Oracle-Track in every setting and is faster than uniform allocation. Its advantage grows with the number of arms, from roughly $1.3\times$ at $K=5$ to $2.0\times$ at $K=10$ and about $3.4\times$ at $K=20$.

\section{Proof of Theorem~\ref{thm:lower-bound}}
\label{app:lower-bound}

\begin{proof}
Fix $\bmu\in\Theta$. The claim is trivial if $\E_{\bmu}[\tau_\delta]=\infty$, so assume that this expectation is finite. For any $\bnu\in\Alt_{\preceq}(\bmu)$, consider the event
\[
    E=\{\widehat A_\delta=\ans(\bmu)\}.
\]
Since the algorithm is $\delta$-correct on the entire model class and $\ans(\bnu)\ne\ans(\bmu)$,
\[
    \Prob_{\bmu}(E)\ge1-\delta,
    \qquad
    \Prob_{\bnu}(E)\le\delta.
\]
Applying the change-of-measure inequality \citep[Lemma~1]{kaufmann2016complexity} to the stopped history gives
\begin{equation}
    \sum_{i=1}^K \E_{\bmu}[N_i(\tau_\delta)]
    \frac{\|\mu_i-\nu_i\|_2^2}{2}
    \ge \kl\bigl(\Prob_{\bmu}(E),\Prob_{\bnu}(E)\bigr)
    \ge \kl(1-\delta,\delta).
    \label{eq:app-change-of-measure}
\end{equation}
For completeness, the first inequality follows from the data-processing inequality applied to the indicator of $E$ in the stopped history. The relative entropy of that history equals the left-hand side, because the sampling rule is the same under both instances and each observation from arm $i$ contributes the Gaussian divergence $\|\mu_i-\nu_i\|_2^2/2$. This identity is justified by first stopping at $\tau_\delta\wedge n$ and then using the integrability of the stopped Gaussian log-likelihood increments when $\E_{\bmu}[\tau_\delta]<\infty$. The second inequality uses the monotonicity of the binary relative entropy $\kl(p,q)$ for $p\ge1-\delta>\delta\ge q$.

Define the expected sampling proportions as
\[
    \overline w_i
    =\frac{\E_{\bmu}[N_i(\tau_\delta)]}{\E_{\bmu}[\tau_\delta]}.
\]
The denominator is positive by Eq.~(\ref{eq:app-change-of-measure}), and $\sum_iN_i(\tau_\delta)=\tau_\delta$ implies $\overline w\in\simp K$. Taking the infimum of Eq.~(\ref{eq:app-change-of-measure}) over $\bnu\in\Alt_{\preceq}(\bmu)$ yields
\begin{align*}
    \kl(1-\delta,\delta)
    &\le \E_{\bmu}[\tau_\delta]
       \inf_{\bnu\in\Alt_{\preceq}(\bmu)}
       \mathcal I(\bmu,\bnu;\overline w)\\
    &\le \E_{\bmu}[\tau_\delta]\,
       \mathcal I_{\preceq}^{*}(\bmu).
\end{align*}
Using $T_{\preceq}(\bmu)=1/\mathcal I_{\preceq}^{*}(\bmu)$ proves Eq.~(\ref{eq:lower-bound}). Finally,
\[
    \kl(1-\delta,\delta)
    =(1-2\delta)\log\frac{1-\delta}{\delta},
    \qquad
    \frac{\kl(1-\delta,\delta)}{\log(1/\delta)}\longrightarrow1,
\]
which proves the asymptotic lower bound.
\end{proof}

\section{Proof of Lemma~\ref{lem:two-arm-distance}}
\label{app:two-arm-distance}

For a feasible-difference set $\mathcal D\subseteq\R^M$, write
\[
    \Theta(\mathcal D)
    =\{(\mu_1,\mu_2)\in\R^{2\times M}:
        \mu_1-\mu_2\in\mathcal D\}.
\]
The assumption of the lemma that the model depends only on the mean
difference means precisely that $\Theta=\Theta(\mathcal D)$; in
particular, the common location of the two means is unrestricted.

\begin{proof}
Fix an allocation $(\alpha,1-\alpha)\in\simp 2$ and an alternative
difference $z\in\Alt_{\preceq}(d)$. Since the model
$\Theta(\mathcal D)$ leaves the common location of the means unrestricted,
every pair with difference $z$ is feasible. Writing
$a=\mu_1-\nu_1$ and $b=\mu_2-\nu_2$, the constraint is $a-b=d-z$.
The identity
\[
    \alpha\|a\|_2^2+(1-\alpha)\|b\|_2^2
    =\alpha(1-\alpha)\|a-b\|_2^2
      +\|\alpha a+(1-\alpha)b\|_2^2
\]
therefore gives
\begin{equation}
    \inf_{\nu_1-\nu_2=z}
    \mathcal I(\bmu,\bnu;(\alpha,1-\alpha))
    =\frac{\alpha(1-\alpha)}2\|d-z\|_2^2.
    \label{eq:app-fixed-difference-cost}
\end{equation}
The infimum is attained at
$a=(1-\alpha)(d-z)$ and $b=-\alpha(d-z)$, including when
$\alpha\in\{0,1\}$. Consequently,
\[
    \mathcal I_{\preceq}^{*}(d)
    =\max_{\alpha\in[0,1]}
      \frac{\alpha(1-\alpha)}2
      \dist^2(d,\Alt_{\preceq}(d))
    =\frac18\dist^2(d,\Alt_{\preceq}(d)).
\]
Taking reciprocals proves Eq.~(\ref{eq:distance-time}).
\end{proof}

\section{Auxiliary Results for the Two-Arm Comparisons}
\label{app:two-arm-comparisons}

We now specify the feasible-difference sets used in the two-arm
comparisons. Define
\begin{equation}
\begin{aligned}
    \mathcal D_{\free}
    &=(\R\setminus\{0\})^M,\\
    \mathcal D_v
    &=\{z\in\mathcal D_{\free}:v^\top z\ne0\},\\
    \mathcal D_{\cmp}
    &=\R_{++}^M\cup(-\R_{++}^M).
\end{aligned}
\label{eq:app-comparison-models}
\end{equation}
The unrestricted Pareto and lexicographic comparisons use
$\Theta(\mathcal D_{\free})$, and linear scalarization uses
$\Theta(\mathcal D_v)$. Excluding ties ensures local stability
and does not change the distances to the decision boundaries computed
below. In the common-ranking comparison in
Proposition~\ref{prop:scalar-acceleration}, the feasible-difference set
$\mathcal D$ is $\mathcal D_{\cmp}$ and the model is
$\Theta(\mathcal D_{\cmp})$. This restriction is known to the learner and applies to the entire
model, i.e., to both the true instance and every feasible
alternative in Eq.~(\ref{eq:alternatives}).

\begin{proposition}[Two-arm characteristic times]
\label{prop:app-two-arm-times}
Let $d\in\mathcal D_{\free}$, and define
\[
    P(d)=\{m:d_m>0\},\qquad N(d)=\{m:d_m<0\},
\]
\[
    S_+(d)=\sum_{m\in P(d)}d_m^2,\qquad
    S_-(d)=\sum_{m\in N(d)}d_m^2.
\]
Under vector feedback, Pareto identification on
$\Theta(\mathcal D_{\free})$ has characteristic time
\begin{equation}
    T_{\parord}(d)=
    \begin{cases}
        \displaystyle\frac8{\min_{m\in[M]}d_m^2},
        & P(d)=\emptyset\text{ or }N(d)=\emptyset,\\[2mm]
        \displaystyle\frac8{\min\{S_+(d),S_-(d)\}},
        & P(d)\ne\emptyset\text{ and }N(d)\ne\emptyset.
    \end{cases}
    \label{eq:app-pareto-time}
\end{equation}
For $v\in\R_+^M\setminus\{0\}$ and $d\in\mathcal D_v$,
linear scalarization on $\Theta(\mathcal D_v)$ satisfies
\begin{equation}
    T_v(d)=\frac{8\|v\|_2^2}{(v^\top d)^2}.
    \label{eq:app-linear-time}
\end{equation}
For lexicographic identification, distinguish the two model classes by
writing
\begin{equation}
    T_{\lex}^{\free}(d)=\frac8{d_1^2}
    \quad(d\in\mathcal D_{\free}),
    \qquad
    T_{\lex}^{\cmp}(d)=\frac8{\|d\|_2^2}
    \quad(d\in\mathcal D_{\cmp}).
    \label{eq:app-lex-times}
\end{equation}
The superscripts in Eq.~(\ref{eq:app-lex-times}) indicate the model class,
not the feedback model.
\end{proposition}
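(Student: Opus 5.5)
The plan is to reduce every case to Lemma~\ref{lem:two-arm-distance}. In each model class $\Theta(\mathcal D)$ the model depends only on the mean difference, so $T_{\preceq}(d)=8/\dist^2(d,\Alt_{\preceq}(d))$, and it suffices to compute the Euclidean distance from $d$ to the set of feasible differences whose answer differs from that of $d$. Since the excluded tie sets (coordinate hyperplanes, $\{v^\top z=0\}$) are closed and have empty interior, the infimum over the alternative set equals the distance to its closure. This closure is the corresponding region of $\R^M$ with ties allowed. Thus I may compute distances to closed polyhedral regions and approximate minimizers by feasible non-tie points.

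\textbf{Pareto case.} For $d\in\mathcal D_{\free}$ with two arms, the answer is $\{1\}$ if $d\in\R_{++}^M$ (within the feasible set), $\{2\}$ if $d\in-\R_{++}^M$, and $\{1,2\}$ when $d$ has mixed signs.

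If $N(d)=\emptyset$, the alternatives are every feasible $z$ with some $z_m<0$. The closure is $\bigcup_m\{z_m\le0\}$, whose distance from $d$ is $\min_m d_m$, giving $8/\min_m d_m^2$. The case $P(d)=\emptyset$ is symmetric.

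If both $P(d)$ and $N(d)$ are nonempty, the alternatives are $\R_{++}^M\cup(-\R_{++}^M)$. The squared distance from $d$ to the closed orthant $\R_+^M$ is $\sum_{m\in N(d)}d_m^2=S_-(d)$, obtained by projecting the negative coordinates to $0$. Likewise, the squared distance to $-\R_+^M$ is $S_+(d)$. Taking the smaller of the two gives the second line of Eq.~(\ref{eq:app-pareto-time}).

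\textbf{Linear case.} The answer on $\mathcal D_v$ is determined by $\operatorname{sign}(v^\top d)$. The alternative set is the open half-space of opposite sign, intersected with $\mathcal D_v$. Its closure is the closed half-space, whose distance from $d$ is $|v^\top d|/\|v\|_2$. This yields Eq.~(\ref{eq:app-linear-time}).

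\textbf{Lexicographic case.} On $\mathcal D_{\free}$ every coordinate is nonzero, so the first differing coordinate is always the first one. The answer is therefore $\operatorname{sign}(d_1)$, the alternatives form the half-space $\{z_1d_1<0\}$, and the distance is $|d_1|$.

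On $\mathcal D_{\cmp}$ the lexicographic answer again equals $\operatorname{sign}(d_1)$, but the constraint forces every alternative of a $d\in\R_{++}^M$ to lie in $-\R_{++}^M$. The squared distance from $d$ to $-\R_+^M$ is $\|d\|_2^2$, since every coordinate must be moved to $0$. This gives $8/\|d\|_2^2$.

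The main obstacle is the closure step. I need to check that the points achieving these distances (for instance the projection $z$ with $z_m=0$ on $N(d)$) can be approximated by feasible points of $\mathcal D$ that carry the alternative answer. For example, I would perturb the zeroed coordinates to small values of the correct sign while keeping the other coordinates nonzero. I also need to verify $0<\dist<\infty$, so that Lemma~\ref{lem:two-arm-distance} applies. This holds because $d$ has no zero coordinates and $v^\top d\ne0$.
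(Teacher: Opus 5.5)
Your proposal is correct and follows essentially the same route as the paper. In both, each case is reduced via Lemma~\ref{lem:two-arm-distance} to the distance from $d$ to the closure of the answer-changing set (a union of closed half-spaces, a pair of closed orthants, or a closed half-space), computed by coordinatewise projection or Cauchy--Schwarz, with the minimizers then approached by feasible tie-free alternatives.
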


\begin{proof}
By Lemma~\ref{lem:two-arm-distance}, it suffices to compute the squared
distance from $d$ to the corresponding set of feasible answer-changing
differences. We use $\dist(d,S)=\dist(d,\overline S)$, so boundary points
need not belong to the model to determine this infimum.

\paragraph{Pareto preferences.}
Suppose first that $d\in\R_{++}^M$. The prescribed answer is $\{1\}$.
Any alternative in $\mathcal D_{\free}$ must have a negative coordinate,
so its squared distance from $d$ is at least $\min_m d_m^2$.
Conversely, choose $j\in\argmin_m d_m^2$ and replace $d_j$ by
$-\eta$, leaving the other coordinates unchanged. This yields a feasible
answer-changing difference for every $\eta>0$, and letting
$\eta\downarrow0$ shows that the squared distance equals $\min_m d_m^2$.
The case $d\in-\R_{++}^M$ is symmetric.

If $P(d)$ and $N(d)$ are both nonempty, neither arm dominates the other,
and the answer is $\{1,2\}$. An alternative changes this answer if and only
if one arm dominates the other. Since no coordinate is zero in
$\mathcal D_{\free}$, this is equivalent to all coordinate differences
having the same sign. Hence
\[
    \overline{\Alt_{\parord}(d)}
    =\R_+^M\cup(-\R_+^M).
\]
Coordinatewise projection gives
\[
    \dist^2(d,\R_+^M)=S_-(d),
    \qquad
    \dist^2(d,-\R_+^M)=S_+(d).
\]
Both projections can be approached by strictly positive or strictly
negative differences, respectively. Therefore, the squared distance to the
alternative set is $\min\{S_+(d),S_-(d)\}$, which proves
Eq.~(\ref{eq:app-pareto-time}).

\paragraph{Linear scalarization.}
Write $a=v^\top d\ne0$. The closure of the answer-changing differences
is the halfspace
\[
    H_a=\{z\in\R^M:a\,v^\top z\le0\}.
\]
Indeed, differences whose scalarized gap has the opposite sign are
answer-changing, and excluding coordinate ties does not affect
the closure of this open halfspace. For any $z\in H_a$,
Cauchy--Schwarz implies
\[
    \|d-z\|_2
    \ge\frac{|v^\top(d-z)|}{\|v\|_2}
    \ge\frac{|a|}{\|v\|_2}.
\]
The projection
\[
    z^*=d-\frac{a}{\|v\|_2^2}v
\]
attains this distance to $H_a$. To approach $z^*$ by feasible
alternatives, take
$z^*-\eta\operatorname{sign}(a)v$ with $\eta>0$ and, if necessary,
apply an additional perturbation of norm $o(\eta)$ to avoid coordinate
ties. Such perturbations preserve the strictly opposite sign of
the scalarized gap. Thus,
\[
    \dist^2(d,\Alt_v(d))=\frac{(v^\top d)^2}{\|v\|_2^2},
\]
which proves Eq.~(\ref{eq:app-linear-time}).

\paragraph{Unrestricted lexicographic preferences.}
Suppose $d_1>0$. Every answer-changing difference in
$\mathcal D_{\free}$ has a negative first coordinate, so its
squared distance from $d$ is at least $d_1^2$. Replacing only $d_1$ by
$-\eta$ gives feasible alternatives approaching this lower bound as
$\eta\downarrow0$. Exchanging the arms covers $d_1<0$ and proves the
first identity in Eq.~(\ref{eq:app-lex-times}).

\paragraph{Lexicographic preferences under common ranking.}
Suppose $d\in\R_{++}^M$. The answer is $\{1\}$, and the feasible
answer-changing differences are exactly $-\R_{++}^M$. Consequently,
\[
    \inf_{z\in-\R_{++}^M}\|d-z\|_2^2
    =\sum_{m=1}^M d_m^2.
\]
For the lower bound, each summand $(d_m-z_m)^2$ is at least $d_m^2$.
For the upper bound, set $z=-\eta\mathbf1_M$ and let
$\eta\downarrow0$. The case $d\in-\R_{++}^M$ is symmetric. The second
identity in Eq.~(\ref{eq:app-lex-times}) now follows from
Lemma~\ref{lem:two-arm-distance}.
\end{proof}

\paragraph{The role of the feasible model.}
Both lexicographic identities can be evaluated at the same gap vector
with strictly same-sign coordinates, but they refer to different sets of
feasible alternatives. Merely observing that the true difference has a
common sign does not justify $T_{\lex}^{\cmp}(d)$: the learner must know
that all feasible instances obey the common-ranking restriction. This
distinction explains why the unrestricted lexicographic examples in
Propositions~\ref{prop:larger-optimal-set} and~\ref{prop:feedback-gain}
are consistent with the common-ranking acceleration in
Proposition~\ref{prop:scalar-acceleration}.

\section{Proof of Proposition~\ref{prop:larger-optimal-set}}
\label{app:larger-optimal-set}

\begin{proof}
Both orders are evaluated on $\Theta(\mathcal D_{\free})$.
For $d=(\Delta\mathbf1_{M/2},-\Delta\mathbf1_{M/2})$, the Pareto answer
is $\{1,2\}$, whereas the lexicographic answer is $\{1\}$. Since
\[
    S_+(d)=S_-(d)=\frac M2\Delta^2,
    \qquad d_1^2=\Delta^2,
\]
Proposition~\ref{prop:app-two-arm-times} gives
\[
    T_{\parord}(d)=\frac{16}{M\Delta^2},
    \qquad T_{\lex}(d)=\frac8{\Delta^2},
    \qquad \frac{T_{\lex}(d)}{T_{\parord}(d)}=\frac M2.
\]
This proves Eq.~(\ref{eq:pareto-easier}).

For $d=(1,\ldots,1,\epsilon)$ with $0<\epsilon<1$, both orders identify
arm $1$. Now $\min_m d_m^2=\epsilon^2$ and $d_1^2=1$, so the same
proposition yields
\[
    T_{\parord}(d)=\frac8{\epsilon^2},
    \qquad T_{\lex}(d)=8,
    \qquad \frac{T_{\lex}(d)}{T_{\parord}(d)}=\epsilon^2.
\]
This proves Eq.~(\ref{eq:pareto-harder}).
\end{proof}

\section{Proof of Proposition~\ref{prop:feedback-gain}}
\label{app:feedback-gain}

\begin{proof}
For coordinate feedback, write $q=(q_{i,m})\in\simp{KM}$ for the sampling proportions and define
\begin{equation}
    \mathcal I_{\preceq}^{*,\coord}(\bmu)
    =\sup_{q\in\simp{KM}}
      \inf_{\bnu\in\Alt_{\preceq}(\bmu)}
      \frac12\sum_{i=1}^K\sum_{m=1}^M
      q_{i,m}(\mu_{i,m}-\nu_{i,m})^2.
    \label{eq:app-coordinate-information}
\end{equation}
Its reciprocal is $T_{\preceq}^{\coord}(\bmu)$.

\paragraph{General feedback bounds.}
For any $q\in\simp{KM}$, define $w_i=\sum_mq_{i,m}$. Then $w\in\simp K$ and, against every feasible alternative,
\[
    \frac12\sum_{i,m}w_i(\mu_{i,m}-\nu_{i,m})^2
    \ge\frac12\sum_{i,m}q_{i,m}(\mu_{i,m}-\nu_{i,m})^2.
\]
Taking infima over alternatives and optimizing over allocations shows that the characteristic information under vector feedback is at least that under coordinate feedback. Conversely, any vector allocation $w$ induces the coordinate allocation $q_{i,m}=w_i/M$, which collects exactly a $1/M$ fraction of its information against every alternative. Thus,
\[
    \frac1M\mathcal I_{\preceq}^{*}(\bmu)
    \le\mathcal I_{\preceq}^{*,\coord}(\bmu)
    \le\mathcal I_{\preceq}^{*}(\bmu).
\]
Local stability and a nonempty alternative set ensure that the vector-feedback characteristic information is positive and finite, so taking reciprocals proves Eq.~(\ref{eq:feedback-gain}). We next verify the two endpoint examples on $\Theta(\mathcal D_{\free})$.

\paragraph{Information for one coordinate comparison.}
For two arms, let $p_m=q_{1,m}+q_{2,m}$. When $d_m>0$, the minimal information needed to reverse the comparison on coordinate $m$ is
\[
    c_m(q)
    =\inf_{a\le b}\left\{
       \frac{q_{1,m}}2(\mu_{1,m}-a)^2
       +\frac{q_{2,m}}2(\mu_{2,m}-b)^2\right\}
    =\frac{q_{1,m}q_{2,m}}{2p_m}d_m^2,
\]
where the right-hand side is defined as zero if $p_m=0$. If both weights are positive, the minimizing values coincide at $(q_{1,m}\mu_{1,m}+q_{2,m}\mu_{2,m})/p_m$. If a weight vanishes, the corresponding arm mean can be moved at no cost, which gives the same formula. Strict reversals approach this infimum. Since $q_{1,m}q_{2,m}\le p_m^2/4$,
\begin{equation}
    c_m(q)\le\frac{p_m d_m^2}{8},
    \label{eq:app-coordinate-balanced}
\end{equation}
with equality at $q_{1,m}=q_{2,m}=p_m/2$. Balancing the arms within each coordinate achieves these equalities simultaneously without changing $p$.

\paragraph{Pareto identification.}
At $d=\Delta\mathbf1_M$, the answer is $\{1\}$. Every alternative reverses at least one coordinate (in the closure), and reversing a single coordinate suffices to change the answer. Hence, for fixed $q$, the information infimum is $\min_m c_m(q)$. Using Eq.~(\ref{eq:app-coordinate-balanced}),
\[
    \mathcal I_{\parord}^{*,\coord}(d)
    =\frac{\Delta^2}{8}
      \max_{p\in\simp M}\min_m p_m
    =\frac{\Delta^2}{8M}.
\]
The last equality follows from $\min_m p_m\le1/M$, with equality at the uniform allocation. Thus, $T_{\parord}^{\coord}(d)=8M/\Delta^2$.

\paragraph{Lexicographic identification.}
An answer-changing difference must have $z_1\le0$, and a strict reversal of the first coordinate alone is feasible. The information infimum is therefore $c_1(q)$, giving
\[
    \mathcal I_{\lex}^{*,\coord}(d)
    =\frac{\Delta^2}{8}\max_{p\in\simp M}p_1
    =\frac{\Delta^2}{8}.
\]
This value is attained by assigning mass $1/2$ to each of the two first-coordinate experiments. Hence, $T_{\lex}^{\coord}(d)=8/\Delta^2$.

\paragraph{Vector feedback and endpoint attainment.}
At $d=\Delta\mathbf1_M$, Proposition~\ref{prop:app-two-arm-times}
gives $T_{\parord}(d)=T_{\lex}(d)=8/\Delta^2$ on
$\Theta(\mathcal D_{\free})$. Combining these identities with the
coordinate-feedback characteristic times proves Eq.~(\ref{eq:feedback-endpoints})
and shows that both bounds in Eq.~(\ref{eq:feedback-gain}) are attained.
\end{proof}

\section{Proof of Proposition~\ref{prop:scalar-acceleration}}
\label{app:scalar-acceleration}

\begin{proof}
We use the model-specific characteristic times from
Proposition~\ref{prop:app-two-arm-times}. In particular, the third line
of Eq.~(\ref{eq:multi-objective-gains}) uses the common-ranking model
$\Theta(\mathcal D_{\cmp})$ specified in
Eq.~(\ref{eq:app-comparison-models}). The scalar benchmark is
\[
    T_{\mathrm{best}}(d)=\frac8{\Delta_{\max}^2(d)}.
\]
Together with Lemma~\ref{lem:two-arm-distance}, this gives the identity
\[
    \mathsf{Gain}_{\preceq}(d)
    =\frac{\dist^2(d,\Alt_{\preceq}(d))}
           {\Delta_{\max}^2(d)},
\]
where the alternative set is taken in the relevant model.

\paragraph{Pareto preferences.}
The condition
$\min\{S_+(d),S_-(d)\}>\Delta_{\max}^2(d)$ implies that both
$P(d)$ and $N(d)$ are nonempty. Equation~(\ref{eq:app-pareto-time})
therefore yields
\[
    \mathsf{Gain}_{\parord}(d)
    =\frac{8/\Delta_{\max}^2(d)}
           {8/\min\{S_+(d),S_-(d)\}}
    =\frac{\min\{S_+(d),S_-(d)\}}
           {\Delta_{\max}^2(d)}>1.
\]
This proves the first line of Eq.~(\ref{eq:multi-objective-gains}).
For incomparable arms, $d_m^2\le\Delta_{\max}^2(d)$ gives
\[
    \mathsf{Gain}_{\parord}(d)
    \le\min\{|P(d)|,|N(d)|\}
    \le\left\lfloor\frac M2\right\rfloor.
\]
The upper bound is attained by
$d=\Delta(\mathbf1_{\lfloor M/2\rfloor},
-\mathbf1_{\lceil M/2\rceil})$ for any $\Delta>0$ and $M\ge2$.
For comparable arms, Eq.~(\ref{eq:app-pareto-time}) instead gives
\[
    \mathsf{Gain}_{\parord}(d)
    =\frac{\min_m d_m^2}{\Delta_{\max}^2(d)}\le1.
\]
Hence, the condition in the proposition characterizes strict Pareto
acceleration on the unrestricted comparison model, and such
acceleration requires $M\ge4$.

\paragraph{Linear scalarization.}
The condition $|v^\top d|>\|v\|_2\Delta_{\max}(d)$ implies
$v^\top d\ne0$. Equation~(\ref{eq:app-linear-time}) gives
\[
    \mathsf{Gain}_v(d)
    =\frac{8/\Delta_{\max}^2(d)}
           {8\|v\|_2^2/(v^\top d)^2}
    =\frac{(v^\top d)^2}
           {\|v\|_2^2\Delta_{\max}^2(d)}>1,
\]
proving the second line of Eq.~(\ref{eq:multi-objective-gains}).
Moreover, since $v\ge0$,
\[
    |v^\top d|\le\|v\|_1\Delta_{\max}(d),
    \qquad
    \mathsf{Gain}_v(d)
    \le\frac{\|v\|_1^2}{\|v\|_2^2}\le M.
\]
For a fixed $v$, the first bound is attained at
$d=\Delta\mathbf1_M$; if, in addition, the weights are equal and positive,
the gain equals $M$. Thus, the maximum gain for a fixed weight vector is
$\|v\|_1^2/\|v\|_2^2$, and the global factor-$M$ bound is attainable.

\paragraph{Lexicographic preferences under common ranking.}
For $d\in\mathcal D_{\cmp}$, the second identity in
Eq.~(\ref{eq:app-lex-times}) yields
\[
    \mathsf{Gain}_{\lex}(d)
    =\frac{8/\Delta_{\max}^2(d)}{8/\|d\|_2^2}
    =\frac{\|d\|_2^2}{\Delta_{\max}^2(d)}.
\]
All coordinates are nonzero, so for $M\ge2$,
\[
    1<\frac{\|d\|_2^2}{\Delta_{\max}^2(d)}\le M.
\]
The upper bound is attained when all coordinate gaps have the same
magnitude. This proves the third line of
Eq.~(\ref{eq:multi-objective-gains}) and the stated factor-$M$ gain.
By contrast, the first identity in Eq.~(\ref{eq:app-lex-times}) gives
$\mathsf{Gain}_{\lex}(d)=d_1^2/\Delta_{\max}^2(d)\le1$
on the unrestricted model. Thus, the lexicographic acceleration is a
consequence of the common-ranking constraint on feasible alternatives,
not merely of the signs of the true gaps.

Finally, the common-ranking restriction does not change the
single-coordinate benchmark: every two-arm scalar mean pair with a
nonzero gap can be extended to an instance in
$\Theta(\mathcal D_{\cmp})$. Hence, for each objective, the induced scalar
model remains unrestricted apart from the exclusion of ties.
The minimum in $T_{\mathrm{best}}(d)$ is an instance-wise oracle
benchmark; we do not claim that an unknown easiest coordinate can be
selected without additional sampling.
\end{proof}

\section{Proof of Theorem~\ref{thm:optimality}}
\label{app:optimality}
\label{app:continuity}

\begin{proof}
Fix $\bmu\in\Theta$, and write $A^*=\ans(\bmu)$, $g=g_{A^*}$, and $\Gamma=\mathcal I_{\preceq}^{*}(\bmu)$. We analyze the sampling process continued indefinitely, ignoring the stopping rule; for each confidence level, the actual algorithm coincides with this process up to its first threshold crossing. We use the exact allocation maximizer in Algorithm~\ref{alg:order-aware} and resolve all ties by fixed measurable rules that do not depend on $\delta$, so that the confidence level enters only through the stopping threshold. Local stability is used in the metric form of Eq.~(\ref{eq:local-stability}).

\paragraph{Step 1: continuity and positivity of the information.}
Let $S\subseteq\R^{K\times M}$ be a fixed nonempty set, and define
\[
    h(\bx,w)=\inf_{\bnu\in S}\frac12\sum_iw_i\|x_i-\nu_i\|_2^2,
    \qquad \bx\in\R^{K\times M},\quad w\in\simp K.
\]
This function is nonnegative and finite, since it is bounded above by the cost of any fixed $\bnu^0\in S$. It is concave in $w$, as an infimum of linear functions.

For fixed $\bx$, it is upper semicontinuous in $w$, as an infimum of continuous linear functions. To prove lower semicontinuity, let $w^{(n)}\to w$ and set
\[
    \eta_n=\max_{i:w_i>0}
           \left(1-\frac{w_i^{(n)}}{w_i}\right)_+.
\]
Then $\eta_n\to0$, and $w_i^{(n)}\ge(1-\eta_n)w_i$ for all $i$, including those with $w_i=0$. Since each per-arm cost is nonnegative,
\[
    h(\bx,w^{(n)})\ge(1-\eta_n)h(\bx,w).
\]
Thus, $h(\bx,\cdot)$ is continuous on the closed simplex. Moreover, the weighted seminorm $\|z\|_w=(\sum_iw_i\|z_i\|_2^2)^{1/2}$ and the triangle inequality give
\[
    \left|\sqrt{2h(\bx,w)}-\sqrt{2h(\boldsymbol{y},w)}\right|
    \le\|\bx-\boldsymbol{y}\|_w\le\|\bx-\boldsymbol{y}\|_F.
\]
Combining this uniform bound with continuity in $w$ yields joint continuity in $(\bx,w)$. Note that no closedness, convexity, or boundedness of $S$ is required. For any $A\in\mathcal A$, take $S=\Theta\setminus\Theta_A$ and write the answer-specific information as
\[
    g_A(\bx,w)
    =\inf_{\bnu\in\Theta\setminus\Theta_A}
        \frac12\sum_iw_i\|x_i-\nu_i\|_2^2.
\]
At the empirical input $\bx=\widehat{\bmu}_t$, its $i$th row is
$x_i=\widehat\mu_i(t)$. The preceding argument establishes the required
continuity and concavity of $g_A$, so its maximum over the compact
simplex is attained. Each answer-region distance is also continuous in
$\bx$, so the minimization over the finitely many answers is well defined.
Consequently, fixed measurable selectors can be used for both optimization
steps.

Local stability and the nonempty alternative set give
\[
    \rho=\inf_{\bnu\in\Theta\setminus\Theta_{A^*}}
              \|\bmu-\bnu\|_F\in(0,\infty).
\]
Evaluating the information at the uniform allocation, and at any fixed alternative $\bnu^0$, gives
\[
    0<\frac{\rho^2}{2K}\le\Gamma
      \le\frac12\max_i\|\mu_i-\nu_i^0\|_2^2<\infty.
\]

\paragraph{Step 2: a confidence event valid under adaptive sampling.}
For each arm $i$, we represent its observations by an independent infinite sequence of potential observations, and let $\widehat\mu_i[n]$ denote the mean of the first $n$ of them. For any deterministic count vector $n=(n_1,\ldots,n_K)$ with $n_i\ge1$,
\[
    Q(n)=\frac12\sum_i n_i\|\widehat\mu_i[n_i]-\mu_i\|_2^2
    \sim\frac12\chi_D^2,
    \qquad D=KM.
\]
For $0<\lambda<1$,
\[
    \log\E\exp\{\lambda(Q(n)-D/2)\}
    =\frac D2\bigl[-\log(1-\lambda)-\lambda\bigr]
    \le\frac{D\lambda^2}{4(1-\lambda)}.
\]
Chernoff's inequality with $\lambda=2\sqrt{x}/(\sqrt D+2\sqrt{x})$ then gives
\begin{equation}
    \Prob\left(Q(n)>\frac D2+\sqrt{Dx}+x\right)\le e^{-x},
    \qquad x>0.
    \label{eq:app-gaussian-tail}
\end{equation}
At time $t$, there are at most $(t+1)^K$ possible count vectors. Applying Eq.~(\ref{eq:app-gaussian-tail}) with $x=\log(2(t+1)^{K+2}/\delta)$ to each of them, before the adaptive count vector is selected, and taking a union bound over count vectors and times gives
\begin{align}
    &\Prob_{\bmu}\left(\exists t\ge K:
       \frac12\sum_iN_i(t)\|\widehat\mu_i(t)-\mu_i\|_2^2
       >\beta(t,\delta)\right)\nonumber\\
    &\qquad\le\frac{\delta}{2}\sum_{t\ge K}(t+1)^{-2}
       \le\delta.
    \label{eq:app-uniform-confidence}
\end{align}
This argument does not require the empirical means to remain Gaussian after conditioning on the adaptively chosen counts.

If $\widehat A_t\ne A^*$, the true instance belongs to $\Theta\setminus\Theta_{\widehat A_t}$, so
\[
    Z_t\le\frac12\sum_iN_i(t)\|\widehat\mu_i(t)-\mu_i\|_2^2.
\]
Therefore, an incorrect recommendation at a threshold crossing can occur only outside the confidence event in Eq.~(\ref{eq:app-uniform-confidence}), so the probability of an incorrect stop is at most $\delta$. Almost-sure stopping is established in Step~6.

\paragraph{Step 3: bounded tracking discrepancy and forced exploration.}
Let
\[
    S_i(t)=1+\sum_{s=K}^{t-1}q_{s,i},
    \qquad b_i(t)=S_i(t)-N_i(t).
\]
We have $b_i(K)=0$ and $\sum_i b_i(t)=0$. Before the pull at time $t+1$, the deficits are $b_i(t)+q_{t,i}$ and sum to one, so the selected deficit is at least $1/K$ before it is reduced by one, while every unselected deficit can only increase. By induction, $b_i(t)>-1$ for all $i$ and $t$; since the deficits sum to zero, also $b_i(t)<K-1$. Thus,
\begin{equation}
    |N_i(t)-S_i(t)|\le K-1,
    \qquad
    \left\|\frac{N(t)}t-\frac{S(t)}t\right\|_1
    \le\frac{K(K-1)}t.
    \label{eq:app-tracking}
\end{equation}
The exploration mixture satisfies $q_{s,i}\ge s^{-1/2}/K$, whence
\begin{equation}
    N_i(t)\ge1+\frac1K\sum_{s=K}^{t-1}s^{-1/2}-(K-1).
    \label{eq:app-forced-counts}
\end{equation}
Consequently, there exist deterministic constants $c_0>0$ and $t_0\ge K$, depending only on $K$, such that $N_i(t)\ge c_0\sqrt t$ for all $i$ and all $t\ge t_0$.

\paragraph{Step 4: an integrable time after which the estimates remain accurate.}
Fix $r>0$. If $\|\widehat{\bmu}_t-\bmu\|_F>r$, then at least one scalar coordinate has error greater than $r/\sqrt{KM}$. For a deterministic sample size $n$, the Gaussian tail bound gives
\[
    \Prob_{\bmu}\left(|\widehat\mu_{i,m}[n]-\mu_{i,m}|>
                         \frac r{\sqrt{KM}}\right)
    \le2\exp\left(-\frac{nr^2}{2KM}\right).
\]
Using Eq.~(\ref{eq:app-forced-counts}) and a union bound over all possible counts yields constants $C_r,c_r>0$ such that
\begin{align*}
    \Prob_{\bmu}(\|\widehat{\bmu}_t-\bmu\|_F>r)
    &\le2KM\sum_{n\ge\lceil c_0\sqrt t\rceil}
             \exp\left(-\frac{nr^2}{2KM}\right)\\
    &\le C_r e^{-c_r\sqrt t},\qquad t\ge t_0.
\end{align*}
Let $L_r$ be one plus the last integer $t\ge t_0$ at which this error exceeds $r$, with $L_r=t_0$ if no such time exists. Then
\[
    \Prob_{\bmu}(L_r>u)\le\sum_{t\ge u}C_r e^{-c_r\sqrt t}
    \quad(u\ge t_0),
\]
and
\[
    \E_{\bmu}[L_r]
    \le t_0+\sum_{t\ge t_0}(t+1)C_r e^{-c_r\sqrt t}
    <\infty.
\]
For $r<\rho/3$, an estimate within distance $r$ of $\bmu$ is at distance at most $r$ from $\Theta_{A^*}$ and at distance at least $\rho-r>r$ from every other answer region. Hence, the exact answer-region fit in Eq.~(\ref{eq:empirical-answer}) selects $\widehat A_t=A^*$ for all $t\ge L_r$. This argument also covers estimates outside $\Theta$.

\paragraph{Step 5: information accumulation after an integrable time.}
Fix $\varepsilon\in(0,\Gamma)$. By joint continuity (Step~1) and
compactness of the simplex, we can choose $r\in(0,\rho/3)$ such that
\[
    \sup_{w\in\simp K}
    |g(\bx,w)-g(\bmu,w)|\le\frac{\varepsilon}{8}
    \quad\text{whenever }\|\bx-\bmu\|_F\le r.
\]
Choose a deterministic $t_1\ge t_0$ such that
$t^{-1/2}\Gamma\le\varepsilon/8$ for all $t\ge t_1$, and let
$L=\max\{L_r,t_1\}$. Then $\E_{\bmu}[L]<\infty$.
For $t\ge L$, the candidate answer is $A^*$, and the exact allocation rule gives
\begin{align*}
    g(\bmu,w_t)
    &\ge g(\widehat{\bmu}_t,w_t)-\frac{\varepsilon}{8}\\
    &=\max_{w\in\simp K}g(\widehat{\bmu}_t,w)
       -\frac{\varepsilon}{8}\\
    &\ge\Gamma-\frac{\varepsilon}{4}.
\end{align*}
Let $u_K=\mathbf1_K/K$. By concavity and nonnegativity,
\[
    g(\bmu,q_t)
    \ge(1-t^{-1/2})g(\bmu,w_t)
       +t^{-1/2}g(\bmu,u_K)
    \ge\Gamma-\frac{3\varepsilon}{8}.
\]
Since
$S(t)/t=(K/t)u_K+t^{-1}\sum_{s=K}^{t-1}q_s$, another application
of concavity gives
\[
    g\left(\bmu,\frac{S(t)}t\right)
    \ge\frac1t\sum_{s=L}^{t-1}g(\bmu,q_s)
    \ge\left(1-\frac Lt\right)
          \left(\Gamma-\frac{3\varepsilon}{8}\right).
\]
For $t\ge8\Gamma L/\varepsilon$, this is at least
$\Gamma-\varepsilon/2$. By Eq.~(\ref{eq:app-tracking}) and the uniform
continuity of $g(\bmu,\cdot)$ on the simplex, there exists a deterministic
$t_2\ge t_0$ such that
\[
    \left|g\left(\bmu,\frac{N(t)}t\right)
          -g\left(\bmu,\frac{S(t)}t\right)\right|
    \le\frac{\varepsilon}{8}
    \qquad(t\ge t_2).
\]
Define
\[
    H_\varepsilon
    =\max\left\{t_2,
       \left\lceil\frac{8\Gamma L}{\varepsilon}\right\rceil\right\}.
\]
Then $H_\varepsilon\ge L$, $\E_{\bmu}[H_\varepsilon]<\infty$, and for
all $t\ge H_\varepsilon$,
\begin{equation}
    \frac{Z_t}{t}
    =g\left(\widehat{\bmu}_t,\frac{N(t)}t\right)
    \ge\Gamma-\frac{\varepsilon}{2}
              -\frac{\varepsilon}{8}-\frac{\varepsilon}{8}
    \ge\Gamma-\varepsilon.
    \label{eq:app-eventual-information}
\end{equation}
The random time $H_\varepsilon$ does not depend on $\delta$. This argument
relies on the average information value of the allocations rather than on
the convergence of individual maximizers, and therefore does not require
a unique optimal allocation.

\paragraph{Step 6: finite expectation and the asymptotic upper bound.}
For fixed $\delta$, the threshold satisfies $\beta(t,\delta)/t\to0$, so there exists a deterministic $b_\delta\ge K$ such that $\beta(t,\delta)<(\Gamma-\varepsilon)t$ for every integer $t\ge b_\delta$. Equation~(\ref{eq:app-eventual-information}) gives
\[
    \tau_\delta\le\max\{H_\varepsilon,b_\delta\}.
\]
This proves that the algorithm stops almost surely and has a finite expected stopping time. Together with Step~2, this establishes $\delta$-correctness.

To obtain the leading constant, set $\ell_\delta=\log(1/\delta)$, fix $\zeta>0$, and define
\[
    n_\delta=
       \left\lceil\frac{(1+\zeta)\ell_\delta}{\Gamma-\varepsilon}\right\rceil.
\]
For fixed $K,M$,
\[
    \log\frac{2(n_\delta+1)^{K+2}}{\delta}
    =\ell_\delta+O(\log\ell_\delta),
    \qquad
    \beta(n_\delta,\delta)=\ell_\delta+o(\ell_\delta).
\]
Hence, $\beta(n_\delta,\delta)<(\Gamma-\varepsilon)n_\delta$ for sufficiently small $\delta$, and this inequality continues to hold for every $t\ge n_\delta$. Indeed, extending the threshold to real $t\ge K$,
\begin{align*}
    t\,\partial_t\beta(t,\delta)
    &\le(K+2)\left(
        1+\frac{\sqrt D}{
        2\sqrt{\log\frac{2(t+1)^{K+2}}{\delta}}}
        \right),\\
    \beta(t,\delta)
    &\ge\log\frac{2(t+1)^{K+2}}{\delta}
      \ge\ell_\delta.
\end{align*}
For sufficiently small $\delta$, the second quantity exceeds the first uniformly in $t$, so $\beta(t,\delta)/t$ is decreasing in $t$. Combining this with Eq.~(\ref{eq:app-eventual-information}) gives
\[
    \tau_\delta\le\max\{H_\varepsilon,n_\delta\},
    \qquad
    \E_{\bmu}[\tau_\delta]\le n_\delta+\E_{\bmu}[H_\varepsilon].
\]
It follows that
\[
    \limsup_{\delta\downarrow0}
       \frac{\E_{\bmu}[\tau_\delta]}{\log(1/\delta)}
    \le\frac{1+\zeta}{\Gamma-\varepsilon}.
\]
Letting $\zeta\downarrow0$ and then $\varepsilon\downarrow0$ yields the upper bound $1/\Gamma=T_{\preceq}(\bmu)$. Theorem~\ref{thm:lower-bound} provides the matching lower bound, which establishes Eq.~(\ref{eq:asymptotic-optimality}).

\end{proof}



\newpage

\bibliographystyle{plainnat}
\bibliography{ref}

@inproceedings{IJCAI:2025:Xue,
  title     = {Problem-dependent Regret for Lexicographic Multi-Armed Bandits with Adversarial Corruptions},
  author    = {Xue, Bo and Lin, Xi and Wan, Yuanyu and Zhang, Qingfu},
  booktitle = {Proceedings of the 34th International Joint Conference on Artificial Intelligence},
  pages     = {6776--6784},
  year      = {2025}
}

@article{xue2025lexicographicLipschitz,
  title   = {Lexicographic Lipschitz Bandits: New Algorithms and a Lower Bound},
  author  = {Xue, Bo and Cheng, Ji and Liu, Fei and Wang, Yimu and Zhang, Lijun and Zhang, Qingfu},
  journal = {Journal of Machine Learning Research},
  volume  = {26},
  number  = {223},
  pages   = {1--56},
  year    = {2025}
}

@inproceedings{xue2026safe,
  title     = {Safe Multi-Objective Linear Bandits with Hierarchical Preferences},
  author    = {Xue, Bo and He, Mengxia and Liu, Yilu and Cheng, Ji and Zhao, Zhe and Zhang, Qingfu},
  booktitle = {Proceedings of the 35th International Joint Conference on Artificial Intelligence},
  pages     = {2418--2426},
  year      = {2026}
}

@article{xue2026costawaremultiobjectivebanditstheory,
  title={Cost-Aware Multi-Objective Bandits: Theory and Application to Budgeted LLM Configuration Evaluation},
  author={Xue, Bo and Hong, Zhi and Li, Jiayi and Wan, Yuanyu and Cheng, Ji and Qiu, Shuang},
  journal={arXiv preprint arXiv:2608.04333},
  year={2026}
}

@article{xue2026efficientonlinelexicographicgeneralized,
  title={Efficient Online Lexicographic Generalized Low-Rank Matrix Bandits},
  author={Xue, Bo and Cheng, Ji and Jing, Haodong and Li, Hongzong and Qiu, Shuang},
  journal={arXiv preprint arXiv:2608.04324},
  year={2026}
}

@book{Boyd:2004,
  title={Convex optimization},
  author={Boyd, Stephen and Vandenberghe, Lieven},
  year={2004},
  publisher={Cambridge university press}
}

@book{Nocedal:2006,
  title     = {Numerical Optimization},
  author    = {Nocedal, Jorge and Wright, Stephen J.},
  edition   = {2},
  publisher = {Springer},
  year      = {2006}
}

@article{Chang:2024-survey-eval,
  title   = {A Survey on Evaluation of Large Language Models},
  author  = {Chang, Yupeng and Wang, Xu and Wang, Jindong and Wu, Yuan and Yang, Linyi and Zhu, Kaijie and Chen, Hao and Yi, Xiaoyuan and Wang, Cunxiang and Wang, Yidong and Ye, Wei and Zhang, Yue and Chang, Yi and Yu, Philip S. and Yang, Qiang and Xie, Xing},
  journal = {ACM Transactions on Intelligent Systems and Technology},
  volume  = {15},
  number  = {3},
  pages   = {1--45},
  year    = {2024}
}

@inproceedings{drugan2013designing,
  title = {Designing Multi-Objective Multi-Armed Bandits Algorithms: A Study},
  author = {Drugan, Madalina and Nowe, Ann},
  booktitle = {The 2013 International Joint Conference on Neural Networks},
  pages = {1--8},
  year = {2013}
}

@article{kaufmann2016complexity,
  title = {On the Complexity of Best-Arm Identification in Multi-Armed Bandit Models},
  author = {Kaufmann, Emilie and Capp{\'e}, Olivier and Garivier, Aur{\'e}lien},
  journal = {Journal of Machine Learning Research},
  volume = {17},
  number = {1},
  pages = {1--42},
  year = {2016}
}

@inproceedings{garivier2016optimal,
  title     = {Optimal Best Arm Identification with Fixed Confidence},
  author    = {Garivier, Aur{\'e}lien and Kaufmann, Emilie},
  booktitle = {Proceedings of the 29th Conference on Learning Theory},
  pages     = {998--1027},
  year      = {2016}
}

@inproceedings{degenne2019multiple,
  author    = {Degenne, R{\'e}my and Koolen, Wouter M.},
  title     = {Pure Exploration with Multiple Correct Answers},
  booktitle = {Advances in Neural Information Processing Systems 32},
  pages     = {14564--14573},
  year      = {2019}
}

@inproceedings{degenne2019games,
  author    = {Degenne, R{\'e}my and Koolen, Wouter M. and M{\'e}nard, Pierre},
  title     = {Non-Asymptotic Pure Exploration by Solving Games},
  booktitle = {Advances in Neural Information Processing Systems 32},
  pages     = {14465--14474},
  year      = {2019}
}

@inproceedings{auer2016pareto,
  title     = {{Pareto} Front Identification from Stochastic Bandit Feedback},
  author    = {Auer, Peter and Chiang, Chao-Kai and Ortner, Ronald and Drugan, Madalina},
  booktitle = {Proceedings of the 19th International Conference on Artificial Intelligence and Statistics},
  pages     = {939--947},
  year      = {2016}
}

@inproceedings{crepon2024pareto,
  title     = {Sequential Learning of the {Pareto} Front for Multi-Objective Bandits},
  author    = {Cr{\'e}pon, Elise and Garivier, Aur{\'e}lien and Koolen, Wouter M.},
  booktitle = {Proceedings of the 27th International Conference on Artificial Intelligence and Statistics},
  pages     = {3583--3591},
  year      = {2024}
}

@inproceedings{ararat2023vector,
  title     = {Vector Optimization with Stochastic Bandit Feedback},
  author    = {Ararat, {\c C}a{\u g}{\i}n and Tekin, Cem},
  booktitle = {Proceedings of the 26th International Conference on Artificial Intelligence and Statistics},
  pages     = {2165--2190},
  year      = {2023}
}

@inproceedings{shukla2024preference,
  author    = {Shukla, Apurv and Basu, Debabrota},
  title     = {Preference-Based Pure Exploration},
  booktitle = {Advances in Neural Information Processing Systems 37},
  pages     = {17313--17347},
  year      = {2024}
}

@inproceedings{das2025frappe,
  author    = {Das, Udvas and Shukla, Apurv and Basu, Debabrota},
  title     = {{FraPPE}: Fast and Efficient Preference-Based Pure Exploration},
  booktitle = {Advances in Neural Information Processing Systems 38},
  pages     = {10190--10236},
  year      = {2025}
}

@inproceedings{xue2026beyond,
  author    = {Xue, Bo and Wan, Yuanyu and Lu, Zhichao and Zhang, Qingfu},
  title     = {Beyond the Lower Bound: Bridging Regret Minimization and Best Arm Identification in Lexicographic Bandits},
  booktitle = {Proceedings of the 40th AAAI Conference on Artificial Intelligence},
  pages     = {27414--27422},
  year      = {2026}
}

@inproceedings{bubeck2009pure,
  title = {Pure Exploration in Multi-Armed Bandits Problems},
  author = {Bubeck, S{\'e}bastien and Munos, R{\'e}mi and Stoltz, Gilles},
  booktitle = {Algorithmic Learning Theory},
  pages = {23--37},
  year = {2009}
}

@inproceedings{audibert2010best,
  title     = {Best Arm Identification in Multi-Armed Bandits},
  author    = {Audibert, Jean-Yves and Bubeck, S{\'e}bastien and Munos, R{\'e}mi},
  booktitle = {Proceedings of the 23rd Conference on Learning Theory},
  pages     = {41--53},
  year      = {2010}
}

@inproceedings{gabillon2012best,
  author    = {Gabillon, Victor and Ghavamzadeh, Mohammad and Lazaric, Alessandro},
  title     = {Best Arm Identification: A Unified Approach to Fixed Budget and Fixed Confidence},
  booktitle = {Advances in Neural Information Processing Systems 25},
  pages     = {3212--3220},
  year      = {2012}
}

@inproceedings{karnin2013almost,
  title     = {Almost Optimal Exploration in Multi-Armed Bandits},
  author    = {Karnin, Zohar and Koren, Tomer and Somekh, Oren},
  booktitle = {Proceedings of the 30th International Conference on Machine Learning},
  pages     = {1238--1246},
  year      = {2013}
}

@article{roijers2013survey,
  title = {A Survey of Multi-Objective Sequential Decision-Making},
  author = {Roijers, Diederik M. and Vamplew, Peter and Whiteson, Shimon and Dazeley, Richard},
  journal = {Journal of Artificial Intelligence Research},
  volume = {48},
  number = {1},
  pages = {67--113},
  year = {2013}
}

@article{huyuk2021multiobjective,
  title = {Multi-Objective Multi-Armed Bandit with Lexicographically Ordered and Satisficing Objectives},
  author = {H{\"u}y{\"u}k, Alihan and Tekin, Cem},
  journal = {Machine Learning},
  volume = {110},
  pages = {1233--1266},
  year = {2021}
}

@inproceedings{kone2023adaptive,
  author    = {Kone, Cyrille and Kaufmann, Emilie and Richert, Laura},
  title     = {Adaptive Algorithms for Relaxed Pareto Set Identification},
  booktitle = {Advances in Neural Information Processing Systems 36},
  pages     = {35190--35201},
  year      = {2023}
}

@inproceedings{karagozlu2024learning,
  title     = {Learning the {P}areto Set Under Incomplete Preferences: Pure Exploration in Vector Bandits},
  author    = {Karag{\"o}zl{\"u}, Efe Mert and Y{\i}ld{\i}r{\i}m, Ya{\c{s}}ar Cahit and Ararat, {\c{C}}a{\u{g}}{\i}n and Tekin, Cem},
  booktitle = {Proceedings of the 27th International Conference on Artificial Intelligence and Statistics},
  pages     = {3070--3078},
  year      = {2024}
}

@article{Liang:2023-HELM,
  title   = {Holistic Evaluation of Language Models},
  author  = {Liang, Percy and Bommasani, Rishi and Lee, Tony and Tsipras, Dimitris and Soylu, Dilara and Yasunaga, Michihiro and Zhang, Yian and Narayanan, Deepak and Wu, Yuhuai and Kumar, Ananya and others},
  journal = {Transactions on Machine Learning Research},
  year    = {2023}
}

@misc{Zhang:2024-HELM-Instruct,
  author = {Yian Zhang and Yifan Mai and Josselin Somerville Roberts and Rishi Bommasani and Yann Dubois and Percy Liang},
  title = {HELM Instruct: A Multidimensional Instruction Following Evaluation Framework with Absolute Ratings},
  month = {February},
  year = {2024},
  url = {https://crfm.stanford.edu/2024/02/18/helm-instruct.html},
}

@inproceedings{fiez2019sequential,
  author    = {Fiez, Tanner and Jain, Lalit and Jamieson, Kevin G. and Ratliff, Lillian},
  title     = {Sequential Experimental Design for Transductive Linear Bandits},
  booktitle = {Advances in Neural Information Processing Systems 32},
  pages     = {10666--10676},
  year      = {2019}
}

@inproceedings{degenne2020gamification,
  title     = {Gamification of Pure Exploration for Linear Bandits},
  author    = {Degenne, R{\'e}my and M{\'e}nard, Pierre and Shang, Xuedong and Valko, Michal},
  booktitle = {Proceedings of the 37th International Conference on Machine Learning},
  pages     = {2432--2442},
  year      = {2020}
}

@inproceedings{du2021combinatorial,
  author    = {Du, Yihan and Kuroki, Yuko and Chen, Wei},
  title     = {Combinatorial Pure Exploration with Full-Bandit or Partial Linear Feedback},
  booktitle = {Proceedings of the 35th AAAI Conference on Artificial Intelligence},
  pages     = {7262--7270},
  year      = {2021}
}

@article{even2006action,
  title   = {Action Elimination and Stopping Conditions for the Multi-Armed Bandit and Reinforcement Learning Problems},
  author  = {Even-Dar, Eyal and Mannor, Shie and Mansour, Yishay},
  journal = {Journal of Machine Learning Research},
  volume  = {7},
  number  = {39},
  pages   = {1079--1105},
  year    = {2006}
}

@inproceedings{kalyanakrishnan2012pac,
  title     = {{PAC} Subset Selection in Stochastic Multi-Armed Bandits},
  author    = {Kalyanakrishnan, Shivaram and Tewari, Ambuj and Auer, Peter and Stone, Peter},
  booktitle = {Proceedings of the 29th International Conference on Machine Learning},
  pages     = {655--662},
  year      = {2012}
}

@inproceedings{jamieson2014lil,
  title     = {{lil' UCB}: An Optimal Exploration Algorithm for Multi-Armed Bandits},
  author    = {Jamieson, Kevin and Malloy, Matthew and Nowak, Robert and Bubeck, S{\'e}bastien},
  booktitle = {Proceedings of the 27th Conference on Learning Theory},
  pages     = {423--439},
  year      = {2014}
}

@article{russo2016simple,
  title   = {Simple Bayesian Algorithms for Best-Arm Identification},
  author  = {Russo, Daniel},
  journal = {Operations Research},
  volume  = {68},
  number  = {6},
  pages   = {1625--1647},
  year    = {2020}
}

@inproceedings{chen2014combinatorial,
  author    = {Chen, Shouyuan and Lin, Tian and King, Irwin and Lyu, Michael R. and Chen, Wei},
  title     = {Combinatorial Pure Exploration of Multi-Armed Bandits},
  booktitle = {Advances in Neural Information Processing Systems 27},
  pages     = {379--387},
  year      = {2014}
}

@inproceedings{soare2014best,
  author    = {Soare, Marta and Lazaric, Alessandro and Munos, R{\'e}mi},
  title     = {Best-Arm Identification in Linear Bandits},
  booktitle = {Advances in Neural Information Processing Systems 27},
  pages     = {828--836},
  year      = {2014}
}

@inproceedings{busa2017multi,
  title     = {Multi-objective Bandits: Optimizing the Generalized {G}ini Index},
  author    = {Busa-Fekete, R{\'o}bert and Sz{\"o}r{\'e}nyi, Bal{\'a}zs and Weng, Paul and Mannor, Shie},
  booktitle = {Proceedings of the 34th International Conference on Machine Learning},
  pages     = {625--634},
  year      = {2017}
}

@inproceedings{turgay2018multi,
  title     = {Multi-objective Contextual Bandit Problem with Similarity Information},
  author    = {Tur{\u{g}}ay, Eralp and {\"O}ner, Doruk and Tekin, Cem},
  booktitle = {Proceedings of the 21st International Conference on Artificial Intelligence and Statistics},
  pages     = {1673--1681},
  year      = {2018}
}

@article{tekin2018multi,
  title   = {Multi-objective Contextual Multi-armed Bandit with a Dominant Objective},
  author  = {Tekin, Cem and Tur{\u{g}}ay, Eralp},
  journal = {IEEE Transactions on Signal Processing},
  volume  = {66},
  number  = {14},
  pages   = {3799--3813},
  year    = {2018}
}

@inproceedings{xu2023pareto,
  title     = {{P}areto Regret Analyses in Multi-objective Multi-armed Bandit},
  author    = {Xu, Mengfan and Klabjan, Diego},
  booktitle = {Proceedings of the 40th International Conference on Machine Learning},
  pages     = {38499--38517},
  year      = {2023}
}

@inproceedings{cheng2024hierarchize,
  author    = {Cheng, Ji and Xue, Bo and Yi, Jiaxiang and Zhang, Qingfu},
  title     = {Hierarchize {P}areto Dominance in Multi-Objective Stochastic Linear Bandits},
  booktitle = {Proceedings of the 38th AAAI Conference on Artificial Intelligence},
  pages     = {11489--11497},
  year      = {2024}
}

@inproceedings{kone2024fixed,
  title     = {Bandit {P}areto Set Identification: The Fixed-Budget Setting},
  author    = {Kone, Cyrille and Kaufmann, Emilie and Richert, Laura},
  booktitle = {Proceedings of the 27th International Conference on Artificial Intelligence and Statistics},
  pages     = {2548--2556},
  year      = {2024}
}

\end{document}